\newcommand{\E}{\mathbb{E}}
\renewcommand{\Pr}{\mathbb{P}}
\newcommand{\distset}{\mathcal{P}}
\newcommand{\costbound}{\bar{c}}
\newcommand{\TD}{\Delta}
\newcommand{\armset}{\mathcal{X}}
\newcommand{\bigO}{\mathcal{O}}
\newcommand{\Fancyname}{Score Function-based Successive Reject}
\newcommand{\fcnm}{\texttt{SFSR}} 
\newcommand{\fcnmL}{\texttt{SFSR-L}}
\theoremstyle{definition}
\newtheorem{assumption}{Assumption}
\newtheorem{definition}{Definition}
\theoremstyle{remark}
\newtheorem{remark}{Remark}
\theoremstyle{plain}
\newtheorem{theorem}{Theorem}
\newtheorem{lemma}{Lemma}
\newtheorem{proposition}{Proposition}
\title{Pure Exploration for Constrained Best Mixed Arm Identification with a Fixed Budget}
\author[1]{\bf Dengwang Tang}
\author[1,*]{\bf Rahul Jain} 
\author[1]{\bf Ashutosh Nayyar}
\author[1]{\bf Pierluigi Nuzzo}
\affil[1]{Ming Hsieh Department of Electrical and Computer Engineering, University of Southern California, Los Angeles, CA 90089}
\affil[]{\texttt{dwtang@umich.edu,\{rahul.jain, ashutosn, nuzzo\}@usc.edu }}
\affil[*]{RJ is also affiliated with Google DeepMind. This work has been done at USC.}
\begin{document}

\maketitle

\begin{abstract}
In this paper, we introduce the constrained best mixed arm identification (CBMAI) problem with a fixed budget. This is a pure exploration problem in a stochastic finite armed bandit model. Each arm is associated with a reward and multiple types of costs from unknown distributions. Unlike the  unconstrained best arm identification problem, the optimal solution for the CBMAI problem may be a randomized mixture of multiple arms. The goal thus is to find the best mixed arm that maximizes the expected reward subject to constraints on the expected costs with a given learning budget $N$. We propose a novel, parameter-free algorithm, called the Score Function-based Successive Reject (SFSR) algorithm, that combines the classical successive reject framework with a novel score-function-based rejection criteria based on linear programming theory to identify the optimal support. We provide a theoretical upper bound on the mis-identification (of the the support of the best mixed arm) probability and show that it decays exponentially in the budget $N$ and some constants that characterize the hardness of the problem instance. We also develop an information theoretic lower bound on the error probability that shows that these constants appropriately characterize the problem difficulty. We validate this empirically on a number of average and hard instances.
\end{abstract}

\section{Introduction}\label{sec:intro}
Bandit models are prototypical models of online learning, exploration and decision making \citep{lattimore2020bandit}. For example, recommender systems for online shopping, video streaming, etc. often use learning algorithms to make recommendations that maximize click-through-rates. Online learning  can either be formulated as a regret minimization problem which leads to algorithms that tradeoff exploration and exploitation \citep{lai1985asymptotically}, where the regret of a learning algorithm is defined with respect to a policy optimizing a single (reward) objective. A number of Bayesian and non-Bayesian algorithms \cite{lattimore2020bandit} have  been proposed for this setting including some that tradeoff the exploration and the exploitation close to the information-theoretic limit possible \cite{auer2002finite,russo-info-ratio}. Or, they can be formulated as a pure exploration problem, also referred to as the best arm identification problem  \cite{audibert2010best} wherein we do adaptive data collection either with a fixed budget \cite{audibert2010best}, or for a fixed confidence \cite{garivier2016optimal}, with the goal of minimizing the probability of mis-identification of the optimal arm. A range of algorithms have been proposed that achieve this goal \cite{audibert2010best,karnin2013almost,garivier2016optimal,qin2017improving}.

However, many online learning problems involve multiple objectives that cannot be aggregated into a single objective function. It is best to formulate such problems as maximizing one objective while constraining the others. In recent literature, progress has been made on constrained bandit models, as well as online reinforcement learning with constraints. In such exploration vs. exploitation settings, surprisingly there are now algorithms that minimize regret while ensuring bounded constraint violation \citep{kalagarla2023safe}. However, sometimes online learning problems are better formulated as pure exploration problems. For example, in video recommendation systems, while it is feasible to allow for some experimentation and exploration to learn user preferences for a limited time, learn an optimal strategy and then freeze it (as for best arm identification), it is simply not practically feasible to allow for continual adaptation for exploration (as for regret minimization). And yet, many practical best arm identification problems involve multiple objectives, which are often aggregated unnaturally into a single objective primarily because the problem of constrained best arm identification is unsolved. 

In this paper, we introduce the \textit{constrained mixed best arm identification} (CBMAI) problem wherein there are $K$ arms, each of which is associated with a reward and multiple cost attributes. These are random, and come from distributions with unknown means. Given a sampling budget $N$, we can do pure exploration and sample the arms in any way, and at the end must identify a mixed arm (i.e., a randomization over a subset of the deterministic arms) such that it meets each of the cost constraints. We allow for mixed arms because they may do better than any deterministic arm under average cost constraints. Our objective is to design an algorithm that minimizes the mis-identification error. 

\textbf{Our Algorithm and Contributions.} 
(i) We provide the first algorithm, called the \Fancyname{} (\fcnm{}) Algorithm, that identifies the optimal support for the CBMAI problem under the fixed-budget setting. The algorithm combines the successful Successive Reject algorithm \citep{audibert2010best} with a novel rejection criteria (using a score function) based on linear programming theory. It addresses the key difficulty that under unknown costs there are uncountably many candidates for the best mixed arm by focusing on  \textit{optimal support identification}.  Using different score functions results in an algorithm with different flavors, and we present two choices for them. (ii) We establish a performance guarantee of the proposed algorithm in the form of an instance-dependent error upper bound, which decays exponentially in $N$ with an exponent characterized by a certain measure of hardness of the instance. This is validated by a lower bound on the error probability over a broad class of algorithms. (iii) We provide empirical results to show that our proposed algorithm significantly outperforms baselines on typical instances.

\textbf{Related Literature.} There is vast literature on multi-arm bandit models summarized well in \cite{lattimore2020bandit}.  We review related literature on best arm identification and constrained learning problems. 

\textbf{Best Arm Identification.} The literature on (unconstrained) best arm identification can be divided into two categories: (1) The fixed confidence setting \citep{kaufmann2013information,jamieson2014lil, garivier2016optimal,russo2016simple,qin2017improving}, where the objective is to identify the best arm with a specified error probability $\delta$ with the smallest amount of samples possible.  Two major algorithm design philosophies in this setting are Top-2 algorithms \citep{russo2016simple} and Track-and-Stop \citep{kaufmann2013information}. (2) The fixed budget setting \citep{audibert2010best,karnin2013almost,carpentier2016tight,yang2022minimax,barrier2023best,poanwang2024best}, where one aims to minimize the identification error given a learning budget $N$. In this setting, round based elimination algorithms are by far the dominant algorithm design philosophy. 

\textbf{Regret-focused Learning in Constrained Problems.} 
There have been a lot of recent literature on designing algorithms to achieve small reward and/or constraint violation regret in constrained multi-armed bandits \citep{amani2019linear,moradipari2021safe,liuxin2021efficient,zhou2022kernelized,pacchiano2024contextual} and constrained MDPs \citep{liutao2021learning,bura2022dope,kalagarla2023safe}. 
In these algorithms, it is necessary to balance exploration and exploitation. In contrast, we are interested in efficiently using the learning budget for exploring without concerning the reward and cost accumulated in the process. 

\textbf{Constrained Best Arm Identification.}
Recently, there have been considerable interest in best arm identification in a constrained setting. In \cite{lindner2022interactively,camilleri2022active,zhenlinwang2022best,faizal2022constrained,shang2023price}, the authors considered the problem of finding the best \emph{deterministic} arm out of a finite set of arms in either fixed-confidence or fixed-budget settings in various constrained multi-armed bandit settings. In contrast, our work focuses on finding the best \textit{mixed} arm. 
There are very few works on best mixed arm identification: The CBMAI problem under the \emph{fixed-confidence} setting assuming that the \emph{costs are known}  was considered in \cite{carlsson2023pure}. In contrast, we assume that the costs are unknown. A fixed-budget best knapsack identification problem assuming that the best solution belongs to a known finite set, and there's an offline oracle for finding the best knapsack under constraints given an input reward function was considered in \cite{nakamura2024fixed}. In contrast, there are uncountably many candidates for the best mixed arm in our setting, and we do not assume access to such an oracle due to costs being unknown. A fixed-budget optimal support identification problem with the same constraints imposed on both the exploration process and the final solution was considered in \cite{li2023optimalarmsknapsacks}. 
In contrast, we do not impose any constraints on the exploration process. Furthermore, the theoretical error bounds therein are unfortunately not correct since the strong concentration results for optimal solutions of randomly perturbed linear programs in their Lemmas B.2 and C.2, which are a critical part of the theoretical analysis, are erroneous. 
In a related line of research, \cite{kone2023bandit} considered the problem of Pareto front identification for arms with multiple attributes. There has also been numerous works on constrained Bayesian Optimization \cite{gardner2014bayesian,gelbart2014bayesian,letham2019constrained,eriksson2021scalable}, where the primary focus is on empirical performance instead of theoretical guarantees.

\paragraph{Notation} For positive integer $M$ we write $[M]:=\{1,2,\cdots, M\}$. For a vector $v\in\mathbb{R}^M$ and $\mathcal{I}\subset [M]$, we use $v_{\mathcal{I}}$ to denote the subvector of $v$ formed by indices in $\mathcal{I}$. For a matrix $A$ with $M$ columns and $\mathcal{I}\subset [M]$, we use $A_{\mathcal{I}}$ to denote the submatrix of $A$ formed by columns indexed by $\mathcal{I}$. We use $\mathcal{N}(\mu, \sigma^2)$ to represent a Gaussian distribution with mean $\mu$ and variance $\sigma^2$. 

\section{Preliminaries}\label{sec:prelim}

\noindent\textbf{Problem Statement.}
We introduce the \emph{constrained best mixed arm identification} (CBMAI) problem in the context of a bandit model: There are $K$ arms, indexed by $[K] = \{ 1, 2, \cdots, K\}$, each associated with a reward function $R_a$ and $L$ cost functions $C_{l,a}$, $a \in [K], l \in [L]$. We would like to determine a mixed arm $p^*$, i.e., a probability distribution over the arms (in the probability simplex $\distset_K := \{ p\in \mathbb{R}_+^K: \bm{1}^T p = 1 \} $) such that it achieves the following
\begin{equation}\label{eq:LP1}
    \begin{split}
        \max_{p\in \distset_K}\{\mathbf{R}^T p : \mathbf{C} p \leq \costbound \},
    \end{split}
\end{equation}
where $\mathbf{R}=(R_1,\cdots, R_K)^T$, $(\mathbf{C})_{l,a} = C_{l,a}$ and the vector $\costbound \in\mathbb{R}^L$. This is a linear program, and by theory of linear programming \cite{luenberger1984linear}, an optimal solution of \eqref{eq:LP1} can be obtained on an extreme point of the constraint polytope. 
Thus, when the costs are known, the best mixed arm will lie in a known finite set \citep{carlsson2023pure,nakamura2024fixed} since the constraint polytope has a finite number of vertices. 

However, our motivation comes from the bandit setting, and typically both the rewards $R$ and costs $C$ for each arm are random, and come from an unknown distribution. In that case, there can be uncountably many candidates for the best mixed arm making identifying the exact best mixed arm virtually impossible. Thus, we focus  on the \emph{optimal support identification}, i.e., identifying the arms that have non-zero probability in the best mixed arm. We denote such a set of arms by $\mathcal{I}^*$ (we will define it precisely later).  We will consider that when the learning agent chooses arm $a$, it receives a reward $R_a \sim \mathcal{N}(r_a, \sigma_r^2)$, and also incurs costs $C_{l,a} \sim \mathcal{N}(c_{l,a}, \sigma_c^2)$,  $l=1,2,\cdots,L$. The random rewards and costs are assumed mutually independent. We will assume that for the first $K_0$ arms, the mean reward $(r_a)_{a\in [K_0]}$ and mean costs $(c_{l,a})_{a\in[K_0], l \in [L]}$ are unknown. The means of the reward and costs of arm $a\in \{K_0+1, \cdots, K\}$ is assumed to be known. The variances are not needed to be known by the algorithms we design, but assuming them known will simplify our analysis. 
Thus, we would like to solve the following LP problem that optimizes the expected reward subject to constraints on expected costs
\begin{equation}\label{eq:LP}
    \begin{split}
        \max_{p\in \distset_K }\{\mathbf{r}^T p : \mathbf{c} p \leq \costbound \},
    \end{split}\tag{LP}
\end{equation}
where the components of $\mathbf{r}=(r_a)_{a\in [K]}$ and $\mathbf{c} = (c_{l, a})_{l\in[L], a\in [K]}$ corresponding to the first $K_0$ arms are unknown and must be learnt.  

To that end, we need samples of reward and costs for various arms. We consider a \textit{fixed-budget} setting, i.e., we can only obtain \textit{at most} $N$ such samples. 
We assume an underlying probability space $(\Omega, \mathcal{F}, \mathbb{P})$, and would like to design a learning agent $\phi$ that minimizes the \emph{misidentification probability} $\Pr_{\mathbf{r},\mathbf{c}}(\armset^{\mathrm{\phi}} \neq \mathcal{I}^*)$,
i.e. the probability of misidentifying the optimal support $\mathcal{I}^*$, where $\armset^{\mathrm{\phi}}$ is the subset of arms output by the algorithm. Note that we use a \textit{strict criteria}: we only consider the identification to be correct if the output support is \emph{exactly} the set of arms in the optimal support.

\begin{remark} We make a few observations.
(i) Our algorithm does not need the Gaussian distributions assumption. Furthermore, our main result (Theorem \ref{thm:main}) can be adapted to sub-Gaussian reward and cost distributions. However, assuming Gaussian distributions allows us to focus on presenting key ideas while reducing unnecessary technical details related to concentration inequalities that can obfuscate the intuition.
(ii) For best \textit{deterministic arm} identification problem, constrained or not, explicitly formulating known arms in the model may not be necessary: One can always run any algorithm on the subset of unknown arms, obtain the best arm in this subset, and compare it with the best known arm. However, this is not the case for constrained best mixed arm identification, as the addition of a known arm could introduce new unknown arms into the optimal mixed arm. 
(iii) If the optimal support is known, one can easily finetune the mixing probabilities with online data and quickly converge  to the best mixed arm without the need to explore arms outside of the support. However, finding the optimal support can be a challenging problem due to its combinatorial nature. 
\end{remark}

\section{The \Fancyname{} (\fcnm) Algorithm}\label{sec:ivsr}

We first derive our main algorithm, the \emph{\Fancyname} (\fcnm) algorithm, that uses a novel elimination rule we designed based on the \emph{intersection value} (IV) score. We will later show that substituting this score function with another results in a different flavor of the algorithm that can also have good empirical performance.

Consider the standard form of \eqref{eq:LP} where we add the slack vector $s\in\mathbb{R}_+^L$:
\begin{equation}\label{eq:SFLP0}
    \begin{split}
        \max_{p\in\mathbb{R}_+^K, s\in\mathbb{R}_+^L} \{\mathbf{r}^T p : \mathbf{c} p + s = \costbound, \bm{1}^T p = 1\}
    \end{split}
\end{equation}

Let $\bm{0}_m$ (resp. $\bm{1}_m$) denote the all-0 vector (resp. all-one vector) of length $m$. Let $\mathbf{I}_{L\times L}$ denote the $L$-by-$L$ identity matrix. Set
\begin{align}
    \mathbf{x} = \begin{pmatrix}
        p\\s
    \end{pmatrix},\quad    
    \mu = \begin{pmatrix}
    \mathbf{r}\\\bm{0}_{L}
    \end{pmatrix},\quad 
    \mathbf{A} = \begin{pmatrix}
        \mathbf{c}&\mathbf{I}_{L\times L}\\ \bm{1}_K^T& \bm{0}_L^T
    \end{pmatrix},\quad 
    \mathbf{b} = \begin{pmatrix}
        \costbound \\ 1
    \end{pmatrix},
\end{align}
then \eqref{eq:SFLP0} can be simply written as 
\begin{equation}\label{eq:SFLP}
    \max\{\mu^T \mathbf{x} : \mathbf{A}\mathbf{x} = \mathbf{b}, \mathbf{x}\geq 0 \}\tag{SFLP}
\end{equation}
By the fundamental theorem of linear programming \citep{luenberger1984linear}, an optimal solution of \eqref{eq:SFLP} can be obtained at a \emph{basic feasible solution} (BFS) of $(\mathbf{A}, \mathbf{b})$ which is determined by a \emph{basis} $\mathcal{I}^*\subset [K+L]$. 

\begin{definition}[BFS]
    Let $\mathbf{A}$ be an $M\times R$ matrix and $\mathbf{b}\in\mathbb{R}^M$. A subset $\mathcal{I}$ of $[R]$ is said to be a \emph{basis} of $\mathbf{A}$, if $|\mathcal{I}| = M$.
    A non-negative vector $\mathbf{x}^*\in \mathbb{R}_+^N$ is said to be a \emph{basic feasible solution} (BFS) of $(\mathbf{A}, \mathbf{b})$ corresponding to the \emph{basis} $\mathcal{I}$, if (i) $\mathbf{x}_i^* = 0$ for $i\not\in \mathcal{I}$; (ii) the square sub-matrix $\mathbf{A}_{\mathcal{I}}$ is invertible; and (iii) $\mathbf{x}_{\mathcal{I}}^* = \mathbf{A}_{\mathcal{I}}^{-1} \mathbf{b}\geq 0$. In this case, $\mathcal{I}$ is called a \emph{feasible basis} of $(\mathbf{A}, \mathbf{b})$.
\end{definition}

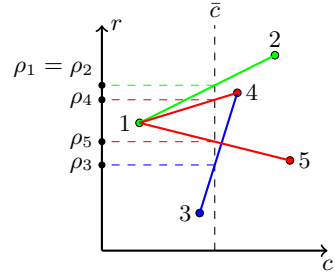
\begin{wrapfigure}{r}{0.4\textwidth}
    \centering\small
    \begin{tikzpicture}
        \coordinate (CBOUND) at (1, 0);
        \coordinate (CARROW) at (2.5, 0);
        \coordinate (RARROW) at (-0.5, 3);
        \draw[->,thick] (RARROW |- CARROW) -- (CARROW) node[anchor=north] {$c$};
        \draw[->,thick] (RARROW |- CARROW) -- (RARROW) node[anchor=west] {$r$};
        \draw[dashed,name path=vrt] (CBOUND) -- (CBOUND |- RARROW) node[anchor=south] {$\costbound$};
        \coordinate (A) at (0.0, 1.7);
        \coordinate (B) at (1.8, 2.6);
        \coordinate (C) at (0.8, 0.5);
        \coordinate (D) at (1.3, 2.1);
        \coordinate (E) at (2.0, 1.2);
        \draw[fill=green] (A) circle [radius=0.05] node[anchor=east] {$1$};
        \draw[fill=green] (B) circle [radius=0.05] node[anchor=south] {$2$};
        \draw[fill=blue] (C) circle [radius=0.05] node[anchor=east] {$3$};
        \draw[fill=red] (D) circle [radius=0.05] node[anchor=west] {$4$};
        \draw[fill=red] (E) circle [radius=0.05] node[anchor=west] {$5$};
        
        \draw[thick,green,name path=AB] (A) -- (B);
        \path[name intersections={of=vrt and AB,by=SCORE12}];
        \draw[dashed, green] (CBOUND |- SCORE12) -- (RARROW |- SCORE12);
        \draw[fill=black] (RARROW |- SCORE12) circle [radius=0.04] node[anchor=south east] {$\rho_1=\rho_2$};

        \draw[thick,blue,name path=CD] (C) -- (D);
        \path[name intersections={of=vrt and CD,by=SCORE3}];
        \draw[dashed, blue] (CBOUND |- SCORE3) -- (RARROW |- SCORE3);
        \draw[fill=black] (RARROW |- SCORE3) circle [radius=0.04] node[anchor=east] {$\rho_3$};

        \draw[thick,red,name path=AD] (A) -- (D);
        \path[name intersections={of=vrt and AD,by=SCORE4}];
        \draw[dashed, red] (CBOUND |- SCORE4) -- (RARROW |- SCORE4);
        \draw[fill=black] (RARROW |- SCORE4) circle [radius=0.04] node[anchor=east] {$\rho_4$};

        \draw[thick,red,name path=AE] (A) -- (E);
        \path[name intersections={of=vrt and AE,by=SCORE5}];
        \draw[dashed, red] (CBOUND |- SCORE5) -- (RARROW |- SCORE5);
        \draw[fill=black] (RARROW |- SCORE5) circle [radius=0.04] node[anchor=east] {$\rho_5$};
    \end{tikzpicture}
    \caption{Intersection value scores for a 5-arm, 1-constraint instance}
    \label{fig:ivscore1constr}
\end{wrapfigure}
For the ease of describing the algorithm, we will refer to the slack variable $s_l$ corresponding to the $l$-th constraint as ``arm $K+l$.'' With this convention, arms $K+1$ to $K+L$ can be thought of as ``virtual arms'' corresponding to slack variables. The proposed algorithm starts with the set of all arms $[K+L]$ and successively rejects one arm in each round. The algorithm returns when either it believes the problem is infeasible, or when there are only $L+1$ arms remaining. 

The choice of which arm to eliminate at each round is determined by a \textit{score function} (like an index in various bandit learning algorithms) for each arm computed from the empirical estimates of their rewards and costs: Let $\armset\subset [K+L]$ to the subset of remaining arms, Let $\hat{\mu}_{\armset}\in \mathbb{R}^{\armset}$ be the empirical mean vector defined by
\begin{equation}\label{eq:muhat}
    \hat{\mu}_{\armset} = (\hat{\mu}_a)_{a\in\armset},\qquad \hat{\mu}_a = \begin{cases}
        \hat{r}_a & a \leq K_0\\
        \mu_a & \text{otherwise}
    \end{cases}.
\end{equation}

Let the empirical constraint sub-matrix $\hat{\mathbf{A}}_{\armset}\in\mathbb{R}^{[L+1]\times \armset} $ be defined by
\begin{equation}\label{eq:Ahat}
    \hat{\mathbf{A}}_{\armset} = (\hat{\mathbf{A}}_{l, a})_{l\in [L+1], a\in \armset},\qquad \hat{\mathbf{A}}_{l, a} = \begin{cases}
        \hat{c}_{l, a}&l \leq L, a\leq K_0\\
        \mathbf{A}_{l, a}&\text{otherwise}
    \end{cases}.
\end{equation}

The \emph{intersection value (IV) score function} for arm $a\in \armset$  is then defined as
\begin{align}
    f_a^{\mathrm{IV}}(\hat{\mathbf{r}}, \hat{\mathbf{c}}) = \max\{\hat{\mu}^T_{\armset} \tilde{\mathbf{x}}: \tilde{\mathbf{x}} \text{ is a BFS of } (\hat{\mathbf{A}}_{\armset}, \mathbf{b}) \text{ corresponding to some basis $\mathcal{J}$ containing}~a\}\label{eq:IVscore}
\end{align}
with the convention that the maximum over an empty set is $-\infty$. 
In Figure \ref{fig:ivscore1constr}, we provide a visual presentation of the intersection value scores for the case $L=1$. The example also explains the name \emph{intersection value}, as it represents the intersections of line segments (for $L>1$, simplices) with the cost boundary (for $L>1$, faces of the constraint polytope).  


The \fcnm{} algorithm will only pull unknown arms $a\in [K_0]$.
The number of times $T_k$ to pull each remaining unknown arm in round $k$ is defined as follows: Set 
$n_0 := 0$, and for $k\in [K-1]$, 
    \begin{equation}\label{eq:Tk}
        n_k := \left\lceil \dfrac{1}{\Psi(K_0, L)} \dfrac{N-K_0}{K+1-k} \right\rceil, ~~T_k := n_k - n_{k-1}, ~\text{where}~ \Psi(K_0, L) := \sum_{j=1}^{K_0} \frac{1}{\max(2, j-L)}. 
    \end{equation}
The \fcnm{} algorithm is formally presented in Algorithm \ref{algo:ivsr}.

\begin{algorithm}[!ht]
   \KwIn{Means of reward and costs of known arms $(r_a)_{a\in (K_0, K]}, (c_{l, a})_{a\in (K_0, K], l\in [L]}$; Cost bound $(\costbound_l)_{l\in [L]}$; Pulling budget $N$}
   \KwOut{Support of the best mixed arm and slack variables (or the symbol $\varnothing$ representing infeasibility) }
   \BlankLine
  Compute $(T_k)_{k=1}^{K-1}$ with \eqref{eq:Tk}\;
  Set $\armset = [K + L]$\;
	 \For{$k = 1$ to $K-1$} {
	    Pull each arm $a\in\armset \cap [K_0]$ for $T_k$ times\;
        Update empirical means of reward $\hat{r}_a$ and costs $(\hat{c}_{l, a})_{l=1}^L$ for arms in $\armset \cap [K_0]$\;
        Compute the score $\hat{\rho}_a^k = f_a^{\mathrm{IV}}(\hat{\mathbf{r}}, \hat{\mathbf{c}})$ for each $a\in\armset$ through \eqref{eq:muhat}\eqref{eq:Ahat}\eqref{eq:IVscore} \;
        \If{$\max_{a\in\armset} \hat{\rho}_a^k = -\infty$}{
            \KwRet $\varnothing$\;
        }
        Eliminate the arm with the lowest score from $\armset$ (with arbitrary tie-breaking)\;
	}
    \KwRet $\armset$\;
    \caption{\Fancyname ~(\fcnm)}
    \label{algo:ivsr}
\end{algorithm}

\begin{remark}
    To use Algorithm \ref{algo:ivsr} to estimate the best mixed arm (i.e., the support \emph{and} the associated probabilities), one can simply construct the empirical constraint sub-matrix $\hat{\mathbf{A}}_{\armset}$ according to \eqref{eq:Ahat} and compute $\hat{x}_{\armset}^* = \hat{\mathbf{A}}_{\armset}^{-1} \mathbf{b}$. Note that $\hat{x}_{\armset}^*$ may include slack variables as well. 
\end{remark}

Note that at the beginning of round $k$, the set $\armset$ of remaining arms contains some true arms and may contain some virtual arms (corresponding to the slack variables). Whether a true or a virtual arm will be eliminated in round $k$ depends on the random realizations of the rewards and costs. Thus, unlike the classical Successive Reject algorithm \cite{audibert2010best}, the number of true arms remaining after each round in our algorithm is a random variable. 
While the total number of (true) arm pulls by our algorithm is random,  we can show that we will always meet the pulling budget $N$. 
\begin{proposition}\label{prop:budget}
    Under Algorithm \ref{algo:ivsr}, the number of total arm pulls never exceeds $N$. 
\end{proposition}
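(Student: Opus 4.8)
The plan is to dominate the random total number of pulls by a deterministic quantity and then verify that this quantity never exceeds $N$; the definition of $\Psi(K_0,L)$ in \eqref{eq:Tk} turns out to be exactly what makes the bound tight.

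First I would note that Algorithm~\ref{algo:ivsr} pulls only unknown arms, and in round $k$ it pulls each remaining unknown arm exactly $T_k$ times. Hence, writing $m_k := |\armset\cap[K_0]|$ for the (random) number of unknown arms present at the start of round $k$, the total number of pulls equals $\sum_{k=1}^{K-1} T_k\, m_k$. The key deterministic observation is that one arm is removed per round, so at the start of round $k$ exactly $k-1$ arms have been eliminated and $|\armset| = K+L-k+1$; since the unknown arms form a subset of $\armset$ and there are only $K_0$ of them, $m_k \le \min(K_0,\,K+L-k+1) =: \bar m_k$. Because $n_k$ is nondecreasing in $k$ we have $T_k = n_k - n_{k-1}\ge 0$, so the random total is at most the deterministic sum $\sum_{k=1}^{K-1} T_k \bar m_k$, and it remains to show this is $\le N$.

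Next I would evaluate $\sum_{k=1}^{K-1} T_k \bar m_k$ by summation by parts. Substituting $j = K+1-k$ rewrites $n_k$ as $\nu_j := \lceil (N-K_0)/(\Psi(K_0,L)\, j)\rceil$ and $\bar m_k$ as $\mu_j := \min(K_0, j+L)$, with $j$ ranging over $\{2,\dots,K\}$ and the convention $\nu_{K+1}=0$. Abel summation collapses the telescoping increments $T_k = \nu_j-\nu_{j+1}$ and leaves $\mu_2\nu_2 + \sum_{j=3}^{K}\nu_j(\mu_j-\mu_{j-1})$; since $\mu_j-\mu_{j-1}$ equals $1$ for $j\le K_0-L$ and $0$ afterwards, this reduces to $\mu_2\nu_2 + \sum_{j=3}^{K_0-L}\nu_j$ in the generic regime $K_0\ge L+2$, where $\mu_2 = L+2$.

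The crux is to match this with $\Psi(K_0,L)$. Splitting the defining sum at $j=L+2$ gives the identity $\Psi(K_0,L) = \tfrac{L+2}{2} + \sum_{j=3}^{K_0-L}\tfrac1j$. Bounding each ceiling via $\lceil x\rceil \le x+1$, the coefficient multiplying $\tfrac{N-K_0}{\Psi(K_0,L)}$ is precisely $\tfrac{L+2}{2}+\sum_{j=3}^{K_0-L}\tfrac1j = \Psi(K_0,L)$, so that term equals exactly $N-K_0$, while the accumulated unit corrections from the ceilings total $\mu_2 + (K_0-L-2) = (L+2)+(K_0-L-2) = K_0$; therefore the sum is at most $(N-K_0)+K_0 = N$. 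I expect the main obstacles to be (i) identifying the sharp deterministic envelope $\bar m_k$ even though the count of remaining unknown arms is random and realization-dependent, and (ii) the exact arithmetic cancellation against $\Psi(K_0,L)$, whose definition is tailored precisely so the telescoped coefficients reproduce it; this also requires separately treating the regime $K_0 < L+2$, where every $\bar m_k=K_0$, $\Psi(K_0,L)=K_0/2$, and the bound reduces to $K_0\lceil (N-K_0)/K_0\rceil \le N$.
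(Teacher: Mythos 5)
Your proof is correct and is essentially the paper's own argument in transposed form: the paper does per-arm accounting (each arm contributes $n_k$ if it is the $k$-th rejected, and the worst case maximizes over which rounds the known/virtual arms occupy, giving the bound $\sum_{k=K-K_0+L+1}^{K+L} n_k$), which is exactly your round-indexed sum $\sum_{k=1}^{K-1} T_k \bar m_k$ with envelope $\bar m_k = \min(K_0, K+L-k+1)$ after Abel summation. In both cases the final arithmetic is identical — bound each ceiling by $\lceil x\rceil \le x+1$, collect $K_0$ unit corrections, and observe that the harmonic-type coefficients sum exactly to $\Psi(K_0,L)$, so the total is at most $(N-K_0)+K_0 = N$.
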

We relegate the proof to Appendix \ref{app:proofs}.

\paragraph{Using a different score function.} \label{sec:ivsr2}
Instead of the intersection value score function $f^{\mathrm{IV}}$, we can also use another function $f^{\mathrm{L}}$, called the Lagrangian function, that comes from linear programming duality theory. The Lagrangian score function is defined as follows: Let $\hat{\mu}_{\armset}$ and $\hat{\mathbf{A}}_{\armset}$ follow the definitions in \eqref{eq:muhat}\eqref{eq:Ahat}. Let $\hat{\lambda}^*\in\mathbb{R}^{L+1}$ be an optimal solution to the empirical dual linear program $\min_{\lambda\in \mathbb{R}^{L+1}}\{\mathbf{b}^T \lambda: \hat{\mathbf{A}}_{\armset}^T \lambda \geq \hat{\mu}_{\armset} \}$. We define
\begin{align}
    f_a^{\mathrm{L}}(\hat{\mathbf{r}}, \hat{\mathbf{c}}) = 
    \begin{cases}
        \left(\hat{\mu}_{\armset} - \hat{\mathbf{A}}_{\armset}^T \hat{\lambda}^*\right)_{a}&\text{empirical dual LP is bounded}\\
        -\infty &\text{otherwise}
    \end{cases}
\end{align}
 This yields another flavor of the \fcnm{} algorithm that we call \texttt{\fcnmL{}}. 
 In Appendix \ref{app:moreexp}, we will see that the \texttt{\fcnmL{}} algorithm on some problem instances can perform better than the \fcnm{} algorithm.

\section{Analysis}
\subsection{Preliminaries}\label{sec:gap}
We first introduce the following mild assumption that we will use for our analysis.

\begin{assumption}\label{assump:unique}
    The linear program \eqref{eq:SFLP} has a unique optimal solution $\mathbf{x}^*$ with exactly $L+1$ non-zero coordinates. 
\end{assumption}

\begin{remark}
    Assumption \ref{assump:unique} does not restrict the best mixed arm to be a strict mix of $L+1$ arms: Note that $\mathbf{x}^*$ contains both the mixing probabilities and the slack variables for the constraints. Assumption \ref{assump:unique} requires that if the optimal mixed arm is a mix of $m$ arms, then there need to be exactly $L+1-m$ non-binding constraints under this mixed arm. 
\end{remark}

The uniqueness of optimal solution is a standard assumption in best arm identification problems (e.g. \cite{audibert2010best,kaufmann2016complexity,faizal2022constrained}). 
The further assumption on the size of the support is necessary for CBMAI problems since it ensures the stability of the optimal solution: Without this assumption, an infinitesimal change of the cost matrix $\mathbf{c}$ could result in a change of the support of the best mixed arm. In this case, identifying the support of best mixed arm beyond a certain probability would be impossible under any budget $N$, since it requires estimating $\mathbf{c}$ with infinite precision.

Next, given an instance satisfying Assumption \ref{assump:unique} we formally define the gaps $\Delta_0, (\Delta_{(i)})_{i\in [K+L]}$ that characterize the hardness of the instance. These gaps appear in both the upper bound on the error probability, (Theorem \ref{thm:main}) as well as on its lower bound (Theorem \ref{prop:lowerbound}). 

Let $\mathcal{I}^* = \mathrm{supp}(\mathbf{x}^*)$ denote the support of the optimal solution (or the \emph{optimal basis}).
For each basis set $\mathcal{J} \subset [K+L], |\mathcal{J}| = L + 1$, define the \emph{basis value gap} of $\mathcal{J}$ by
\begin{equation}\label{eq:deltaJ}
\begin{split}
    \TD_{\mathcal{J}}^2 
    &= \inf_{\Tilde{\mathbf{r}}\in\mathbb{R}^{K_0}, \Tilde{\mathbf{c}}\in\mathbb{R}^{L\times K_0}}\left\{\sigma_r^{-2} \sum_{a = 1}^{K_0} (r_a-\Tilde{r}_a)^2 + \sigma_c^{-2} \sum_{l=1}^L \sum_{a = 1}^{K_0} (c_{l, a}-\Tilde{c}_{l, a})^2 : 
    \right.\\&\hspace{8em}\left.
    \Tilde{\mathbf{A}}_{\mathcal{I}^*}^{-1} \mathbf{b}\geq 0, \Tilde{\mathbf{A}}_{\mathcal{J}}^{-1} \mathbf{b}\geq 0,~ \Tilde{\mu}_{\mathcal{J}}^T \Tilde{\mathbf{A}}_{\mathcal{J}}^{-1} \mathbf{b} \geq \Tilde{\mu}_{\mathcal{I}^*}^T \Tilde{\mathbf{A}}_{\mathcal{I}^*}^{-1} \mathbf{b} \right\}
\end{split}
\end{equation}
where $\Tilde{\mathbf{A}}, \Tilde{\mu}$ are defined through $(\Tilde{r}, \Tilde{c})$ in the same way as how $\hat{\mathbf{A}}, \hat{\mu}$ are defined through $(\hat{\mathbf{r}}, \hat{\mathbf{c}})$ in \eqref{eq:muhat}\eqref{eq:Ahat}. 
We follow the convention that the infimum of an empty set is $+\infty$. The basis value gap represents the minimum distance one needs to move $(\mathbf{r}, \mathbf{c})$ to an alternative instance $(\tilde{\mathbf{r}}, \tilde{\mathbf{c}})$ where the expected reward under the originally optimal basis $\mathcal{I}^*$ is overtaken by that of another basis $\mathcal{J}$ while preserving the feasibility of $\mathcal{I}^*$.

Note that in \eqref{eq:deltaJ}, the infimum can be attained by moving only the rewards and costs associated with arms in $(\mathcal{J}\cup\mathcal{I}^*)\cap[K_0]$. We write \eqref{eq:deltaJ} as an infimum over all reward-and-cost vectors for the sake of consistency and ease of notations. 

\begin{proposition}\label{prop:Deltanonzero}
    Under Assumption \ref{assump:unique}, $\mathcal{J}\neq \mathcal{I}^*$ if and only if $\Delta_{\mathcal{J}}^2 > 0$.
\end{proposition}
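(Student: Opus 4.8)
The plan is to prove the two directions of the equivalence separately, treating $\Delta_{\mathcal{J}}^2 > 0$ as the interesting direction. Recall that $\mathcal{I}^*$ is the optimal basis for the true instance $(\mathbf{r}, \mathbf{c})$, and that by Assumption \ref{assump:unique} the optimal solution $\mathbf{x}^*$ is unique with exactly $L+1$ nonzero coordinates, so $\mathcal{I}^*$ is a genuine basis (with $|\mathcal{I}^*| = L+1$) and $\tilde{\mathbf{A}}_{\mathcal{I}^*}^{-1}\mathbf{b} = \mathbf{x}^*_{\mathcal{I}^*} > 0$ strictly at the true parameters.

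First I would dispatch the ``only if'' direction in its contrapositive form: if $\mathcal{J} = \mathcal{I}^*$, then $\Delta_{\mathcal{J}}^2 = 0$. This is immediate because the true parameters $(\tilde{\mathbf{r}}, \tilde{\mathbf{c}}) = (\mathbf{r}, \mathbf{c})$ satisfy all three constraints in \eqref{eq:deltaJ} (the feasibility conditions $\tilde{\mathbf{A}}_{\mathcal{I}^*}^{-1}\mathbf{b} \ge 0$ hold, and the value comparison $\tilde\mu_{\mathcal{J}}^T \tilde{\mathbf{A}}_{\mathcal{J}}^{-1}\mathbf{b} \ge \tilde\mu_{\mathcal{I}^*}^T\tilde{\mathbf{A}}_{\mathcal{I}^*}^{-1}\mathbf{b}$ becomes an equality when $\mathcal{J} = \mathcal{I}^*$), achieving objective value $0$; since the objective in \eqref{eq:deltaJ} is nonnegative, the infimum is exactly $0$.

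For the ``if'' direction I would argue the contrapositive: assuming $\mathcal{J} \neq \mathcal{I}^*$, I would show $\Delta_{\mathcal{J}}^2 > 0$. The idea is that the true parameters violate the chain of constraints strictly, so no perturbation of arbitrarily small cost can satisfy them. Concretely, at the true instance $\mathcal{I}^*$ is the \emph{unique} optimizer, so the basic solution associated with $\mathcal{J}$ has strictly smaller reward than that of $\mathcal{I}^*$ (if $\mathcal{J}$ yields a feasible BFS) or is infeasible; in either case the true point is bounded away from the constraint set in \eqref{eq:deltaJ}. I would make this precise by a compactness/continuity argument: the map $(\tilde{\mathbf{r}},\tilde{\mathbf{c}}) \mapsto (\tilde{\mathbf{A}}_{\mathcal{I}^*}^{-1}\mathbf{b},\ \tilde{\mathbf{A}}_{\mathcal{J}}^{-1}\mathbf{b},\ \tilde\mu_{\mathcal{J}}^T\tilde{\mathbf{A}}_{\mathcal{J}}^{-1}\mathbf{b} - \tilde\mu_{\mathcal{I}^*}^T\tilde{\mathbf{A}}_{\mathcal{I}^*}^{-1}\mathbf{b})$ is continuous wherever $\tilde{\mathbf{A}}_{\mathcal{I}^*}$ and $\tilde{\mathbf{A}}_{\mathcal{J}}$ are invertible, and at the true parameters the last coordinate is strictly negative (strict optimality of $\mathcal{I}^*$). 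Hence there is a neighborhood of $(\mathbf{r},\mathbf{c})$ on which the constraint $\tilde\mu_{\mathcal{J}}^T\tilde{\mathbf{A}}_{\mathcal{J}}^{-1}\mathbf{b} \ge \tilde\mu_{\mathcal{I}^*}^T\tilde{\mathbf{A}}_{\mathcal{I}^*}^{-1}\mathbf{b}$ cannot hold, forcing any feasible $(\tilde{\mathbf{r}},\tilde{\mathbf{c}})$ to lie outside this neighborhood and thus incur strictly positive objective value; taking the infimum preserves strict positivity.

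The main obstacle I anticipate is the case where the true basis $\mathcal{J}$ is \emph{not} feasible or where $\mathbf{A}_{\mathcal{J}}$ is singular at the true parameters, so that the ``strict optimality'' comparison is not directly available. I would handle singularity by noting that the set of $(\tilde{\mathbf{r}},\tilde{\mathbf{c}})$ making $\tilde{\mathbf{A}}_{\mathcal{J}}$ invertible is open, and if $\mathbf{A}_{\mathcal{J}}$ is singular at the truth then a positive perturbation is needed just to enter the feasible region of \eqref{eq:deltaJ}; I would need to confirm this forces a strictly positive infimum rather than an infimum approached in the limit. The delicate point is ensuring the infimum is \emph{attained} or at least bounded below by a positive constant, which I would secure by combining the continuity argument above with the observation that the feasible set of \eqref{eq:deltaJ} is closed (being defined by non-strict inequalities on continuous functions, restricted to the closed region where both matrices stay invertible with the relevant sign conditions), so that the uniqueness from Assumption \ref{assump:unique} rules out any sequence of feasible perturbations converging to the true instance with cost tending to zero.
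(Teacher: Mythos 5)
Your easy direction ($\mathcal{J}=\mathcal{I}^*\Rightarrow\Delta_{\mathcal{J}}^2=0$) is correct and is the same as the paper's. Your neighborhood-continuity argument for the substantive direction also works in the benign cases: when $\mathbf{A}_{\mathcal{J}}$ is invertible at the true parameters, either some coordinate of $\mathbf{A}_{\mathcal{J}}^{-1}\mathbf{b}$ is strictly negative (and stays negative nearby), or the basic solution is feasible and its value is strictly below the optimum by uniqueness (even a degenerate feasible basic solution cannot equal $\mathbf{x}^*$, since $\mathbf{x}^*$ has exactly $L+1$ nonzero coordinates while a basic solution for $\mathcal{J}\neq\mathcal{I}^*$ vanishes somewhere on $\mathcal{I}^*$), and continuity then excludes a whole neighborhood of the truth from the constraint set of \eqref{eq:deltaJ}.

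The genuine gap is exactly the case you flagged and did not resolve: $\mathbf{A}_{\mathcal{J}}$ singular at the truth. Two of your proposed repairs fail there. First, it is not true that ``a positive perturbation is needed just to enter the feasible region'': invertible matrices are dense, so arbitrarily small perturbations restore invertibility of $\tilde{\mathbf{A}}_{\mathcal{J}}$; worse, near a singular matrix $\tilde{\mathbf{A}}_{\mathcal{J}}^{-1}\mathbf{b}$ can blow up, so the value $\tilde{\mu}_{\mathcal{J}}^T\tilde{\mathbf{A}}_{\mathcal{J}}^{-1}\mathbf{b}$ is discontinuous at the truth and can dominate the $\mathcal{I}^*$ value at points arbitrarily close to $(\mathbf{r},\mathbf{c})$ --- your continuity argument simply does not apply. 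Second, your closedness claim is false: the constraints of \eqref{eq:deltaJ} involve $\tilde{\mathbf{A}}_{\mathcal{J}}^{-1}$, which only exists on the \emph{open} set of invertible matrices, so the feasible set of \eqref{eq:deltaJ} need not be closed, and whether its closure contains the true instance is precisely the question at issue; you cannot assume it away. The paper closes this case with a device missing from your sketch: supposing $\Delta_{\mathcal{J}}^2=0$, take feasible perturbations $(\tilde{r}^{(n)},\tilde{c}^{(n)})\to(\mathbf{r},\mathbf{c})$ and pass to the sequence of basic feasible \emph{solutions} $\mathbf{x}^{(n)}$ with $\mathbf{x}^{(n)}_{\mathcal{J}}=(\tilde{\mathbf{A}}^{(n)}_{\mathcal{J}})^{-1}\mathbf{b}$, rather than to the matrices. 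These vectors are uniformly bounded (the $p$-part is a probability vector by the last row of $\mathbf{A}$, and the slack part is then bounded since $\tilde{c}^{(n)}\to\mathbf{c}$), so a subsequence converges to some $\mathbf{x}^{(\infty)}$ satisfying $\mathbf{x}^{(\infty)}\geq 0$ and $\mathbf{A}\mathbf{x}^{(\infty)}=\lim_n\tilde{\mathbf{A}}^{(n)}_{\mathcal{J}}\mathbf{x}^{(n)}_{\mathcal{J}}=\mathbf{b}$, with no invertibility of the limiting $\mathbf{A}_{\mathcal{J}}$ required; continuity is invoked only at $\mathcal{I}^*$, where $\mathbf{A}_{\mathcal{I}^*}$ \emph{is} invertible under Assumption \ref{assump:unique}, to show $\mu^T\mathbf{x}^{(\infty)}$ is at least the optimal value of \eqref{eq:SFLP}. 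Then $\mathbf{x}^{(\infty)}$ is an optimal solution vanishing on $\mathcal{I}^*\setminus\mathcal{J}\neq\varnothing$, contradicting uniqueness. Without this compactness-on-solutions step, your proof does not cover the singular case and is incomplete.
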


We relegate the proof to Appendix \ref{app:prop:Deltanonzero}.

Furthermore, for each $a\in [K+L]$, we define the \emph{arm value gap} of $a$ as
\begin{equation}\label{eq:Deltaa}
    \TD_a^2 = \min\{\TD_{\mathcal{J}}^2: a\in \mathcal{J}\subset [K+L],~ |\mathcal{J}| = L+1\}.    
\end{equation}

Following \cite{audibert2010best}, let $(k)$ denote the arm (including virtual arms) with the $k$-th smallest $\TD_a$ among all arms $a\in [K+L]$. Under Assumption \ref{assump:unique}, it follows from Proposition \ref{prop:Deltanonzero} that $0 = \TD_{(1)} = \cdots = \TD_{(L+1)} < \TD_{(L+2)} \leq \cdots \leq \TD_{(K+L)}$. 

In addition to the above, define the \emph{optimal support infeasibility gap} as
\begin{align}
    \Delta_0^2 = \sigma_c^{-2} \inf_{\Tilde{c}\in\mathbb{R}^{L\times K_0} }\left\{ \sum_{l=1}^L\sum_{a=1}^{K_0}(c_{l, a} - \Tilde{c}_{l, a})^2: \det(\Tilde{\mathbf{A}}_{\mathcal{I}^*}) = 0 \text{ or }\Tilde{\mathbf{A}}_{\mathcal{I}^*}^{-1} \mathbf{b} \not\geq 0 \right\}\label{eq:Delta0}
\end{align}

The fact that $\Delta_0^2 > 0$ under Assumption \ref{assump:unique} can be established via the continuity and strict positiveness of the mappings $\Tilde{c}\mapsto \det(\Tilde{\mathbf{A}}_{\mathcal{I}^*})$ and $\Tilde{c}\mapsto \Tilde{\mathbf{A}}_{\mathcal{I}^*}^{-1} \mathbf{b}$ at $\Tilde{c} = \mathbf{c}_{[K_0]}$. 

\subsection{Upper Bound on the Error Probability of the \fcnm{} Algorithm}
\label{sec:ivsr-ub}
We now provide an upper bound on the mis-identification probability of the \fcnm{} algorithm under Assumption \ref{assump:unique}.  

\begin{theorem}\label{thm:main}
   Let $\armset^{\fcnm}$ denote the output of Algorithm \ref{algo:ivsr}. Then, under Assumption \ref{assump:unique}, we have
    \begin{equation}\label{eq:main}
        \Pr(\armset^{\fcnm} \neq \mathcal{I}^* ) \leq \bigO_L(K)\cdot \exp\left(-\dfrac{\Tilde{N} \Delta_0^{2}}{ K }\right) + \bigO_L(K^{L+2}) \cdot\exp\left(-\min_{2\leq i\leq K}\dfrac{\Tilde{N}\Delta_{(i+L)}^2 }{i} \right)
    \end{equation}
    where $\Tilde{N} := \frac{N - K_0}{3(\frac{L+1}{2} + \log K_0)}$ and $\bigO_L$ is the standard big-O notation where $L$ is treated as a constant. 
\end{theorem}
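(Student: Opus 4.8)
The plan is to adapt the successive-reject analysis of \cite{audibert2010best} to the LP-based score, splitting the error event $\{\armset^{\fcnm}\neq\mathcal{I}^*\}$ into an \emph{infeasibility} event---the algorithm reaches the $\varnothing$ branch, i.e.\ at some round every remaining score equals $-\infty$---and a \emph{misordering} event---some arm of $\mathcal{I}^*$ attains the minimum score and is eliminated in some round. The first is controlled by $\Delta_0$ and produces the first summand of \eqref{eq:main}; the second is controlled by the value gaps $\TD_{(i)}$ and produces the second. Throughout I would use that, by construction \eqref{eq:Tk}, each surviving unknown arm has been pulled exactly $n_k$ times through the end of round $k$, and that after $n$ pulls every empirical mean is Gaussian with variance $\sigma^2/n$; thus each deviation probability below is a Gaussian (chi-square) upper tail.

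For the infeasibility term I would first observe that, as long as all $L+1$ arms of $\mathcal{I}^*$ survive and $\hat{\mathbf{A}}_{\mathcal{I}^*}$ stays invertible with $\hat{\mathbf{A}}_{\mathcal{I}^*}^{-1}\mathbf{b}\ge 0$, the score \eqref{eq:IVscore} of each $a^*\in\mathcal{I}^*$ is finite (it admits the basis $\mathcal{I}^*$), so the $\varnothing$ branch is never taken. Hence reaching $\varnothing$ forces $\det(\hat{\mathbf{A}}_{\mathcal{I}^*})=0$ or $\hat{\mathbf{A}}_{\mathcal{I}^*}^{-1}\mathbf{b}\not\ge 0$, which by the very definition \eqref{eq:Delta0} places the empirical cost vector at weighted squared distance at least $\Delta_0^2$ from $\mathbf{c}$. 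Since this involves only the $\bigO_L(K)$ cost coordinates of the $\le K_0$ surviving unknown arms, a chi-square tail bound applied at round $1$---where the per-arm count $n_1\approx\frac{N-K_0}{K\,\Psi(K_0,L)}$ is smallest---together with a union bound over the $K-1$ rounds, gives $\bigO_L(K)\exp(-c\,n_1\Delta_0^2)$; inserting the normalization of \eqref{eq:Tk} converts $c\,n_1$ into $\tilde N/K$.

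The crux, and the main obstacle, is a deterministic implication that replaces the trivial mean comparison of \cite{audibert2010best}. On the misordering event at round $k$, if $a^*\in\mathcal{I}^*$ attains the minimum score while $\mathcal{I}^*$ is still empirically feasible, then $a^*$ scores no higher than any survivor, and in particular no higher than the largest-gap survivor. Using that $\mathcal{I}^*$ is an admissible basis in \eqref{eq:IVscore} for $a^*$, its score is at least the empirical value $\hat{\mu}_{\mathcal{I}^*}^T\hat{\mathbf{A}}_{\mathcal{I}^*}^{-1}\mathbf{b}$; hence there is a surviving arm $a\notin\mathcal{I}^*$ and a basis $\mathcal{J}\ni a$ with both $\mathcal{I}^*,\mathcal{J}$ empirically feasible and $\hat{\mu}_{\mathcal{J}}^T\hat{\mathbf{A}}_{\mathcal{J}}^{-1}\mathbf{b}\ge\hat{\mu}_{\mathcal{I}^*}^T\hat{\mathbf{A}}_{\mathcal{I}^*}^{-1}\mathbf{b}$. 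This is exactly the region defining $\TD_{\mathcal{J}}^2$ in \eqref{eq:deltaJ}, so $(\hat{\mathbf{r}},\hat{\mathbf{c}})$ lies at weighted squared distance at least $\TD_{\mathcal{J}}^2\ge\TD_a^2$ from $(\mathbf{r},\mathbf{c})$. The difficulty is purely that the IV score is an LP optimal value rather than a coordinate of $\hat{\mu}$, so this step requires reading the overtaking basis off the arm that realizes the higher score, rather than a direct comparison.

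Given this implication the rest is routine. For a fixed triple $(k,a,\mathcal{J})$ the overtaking event is contained in the event that the $\sigma_r^{-2},\sigma_c^{-2}$-weighted squared deviation of $(\hat{\mathbf{r}},\hat{\mathbf{c}})$ from $(\mathbf{r},\mathbf{c})$ over the $\bigO_L(1)$ coordinates of $(\mathcal{J}\cup\mathcal{I}^*)\cap[K_0]$ is at least $\TD_{\mathcal{J}}^2$---a chi-square upper tail with threshold $n_k\TD_{\mathcal{J}}^2$, bounded by $\exp(-c\,n_k\TD_{\mathcal{J}}^2)$. To pair each round with the correct gap I would, as in \cite{audibert2010best}, condition on all eliminations through round $k-1$ being correct, so the survivors are the $K+L-k+1$ arms of smallest gap and the largest-gap survivor is $(K+L+1-k)$; the minimum-score event then forces an overtaking basis $\mathcal{J}\ni(K+L+1-k)$, whence $\TD_{\mathcal{J}}^2\ge\TD_{(K+L+1-k)}^2$. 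A union bound over the $\bigO_L(K^{L})$ completions of this basis, over the survivors, and over the $K$ rounds produces the prefactor $\bigO_L(K^{L+2})$, while substituting $n_k\approx\frac1{\Psi(K_0,L)}\frac{N-K_0}{K+1-k}$ and reindexing by $i=K+1-k$ turns the round-$k$ exponent into $\tilde N\,\TD_{(i+L)}^2/i$. Taking the minimum over $2\le i\le K$, bounding $\Psi(K_0,L)\le\frac{L+1}{2}+\log K_0$, and absorbing the chi-square constant and the ceiling slack of \eqref{eq:Tk} into the factor $3$ of $\tilde N$ yields the second summand; adding the two parts gives \eqref{eq:main}.
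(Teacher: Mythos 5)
Your proposal follows the paper's own proof (Appendix \ref{app:thm:main}) in all essentials: the same per-round decomposition into an empirical-infeasibility event for $\mathcal{I}^*$ and an overtaking-basis event, the same key deterministic step (when $\mathcal{I}^*$ is empirically feasible, every $i\in\mathcal{I}^*$ has IV score at least $\hat{\mu}_{\mathcal{I}^*}^T\hat{\mathbf{A}}_{\mathcal{I}^*}^{-1}\mathbf{b}$, so eliminating an arm of $\mathcal{I}^*$ produces a survivor $a$ and an empirically feasible basis $\mathcal{J}\ni a$ whose value overtakes, which places $(\hat{\mathbf{r}},\hat{\mathbf{c}})$ in the region defining $\TD_{\mathcal{J}}^2$ in \eqref{eq:deltaJ}), the same chi-square tails, the same union-bound counting of order $K\cdot K\cdot\bigO_L(K^L)$, and the same reindexing $i=K+1-k$. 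However, one quantitative step fails as written: for the infeasibility term you take a chi-square tail over ``the $\bigO_L(K)$ cost coordinates of the $\le K_0$ surviving unknown arms.'' With degree $m=\bigO_L(K)$, the tail bound (Lemma \ref{lem:chi2}) carries a prefactor $3^{m/2}=3^{\bigO_L(K)}$ per round, exponential in $K$, so the claimed $\bigO_L(K)$ prefactor in \eqref{eq:main} would not follow. The paper's repair is that the constraints in \eqref{eq:Delta0} involve only $\tilde{\mathbf{A}}_{\mathcal{I}^*}$, so empirical infeasibility already forces the deviation restricted to the coordinates of $\mathcal{I}^*\cap[K_0]$ (degree $m\le L(L+1)=\bigO_L(1)$) to exceed $\Delta_0^2$ --- exactly the localization you perform correctly for the overtaking term via $(\mathcal{J}\cup\mathcal{I}^*)\cap[K_0]$, but omit here.

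Second, your gap-pairing step --- ``condition on all eliminations through round $k-1$ being correct, so the survivors are the $K+L-k+1$ arms of smallest gap and the largest-gap survivor is $(K+L+1-k)$'' --- is a false inference, and also not what \cite{audibert2010best} do: a correct history only means no arm of $\mathcal{I}^*$ has been removed; eliminations need not follow gap order, so the survivor set is not determined and the largest-gap survivor need not be arm $(K+L+1-k)$. The correct step (the paper's event \eqref{eq:orderviolation}) is pigeonhole plus event inclusion: entering round $k$ only $k-1$ arms have been removed, so at least one of the $k$ largest-gap arms $(K+L-k+1),\cdots,(K+L)$ survives, and the first elimination of an $\mathcal{I}^*$ arm at round $k$ implies some such survivor's score is at least $\min_{i\in\mathcal{I}^*}\hat{\rho}_i^k$; this pairs round $k$ with gap $\TD_{(K+L+1-k)}$ with no conditioning at all, and your subsequent union over survivors and bases already matches this accounting. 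A last small hole: you analyze the misordering event only ``while $\mathcal{I}^*$ is still empirically feasible'' and tie the $\Delta_0$ term to the $\varnothing$ branch alone, leaving uncovered the subcase where an arm of $\mathcal{I}^*$ is eliminated while $\hat{\mathbf{A}}_{\mathcal{I}^*}$ is singular or $\hat{\mathbf{A}}_{\mathcal{I}^*}^{-1}\mathbf{b}\not\geq 0$; the paper closes this by defining the per-round event $\mathcal{E}_{k,0}$ to be empirical infeasibility of $\mathcal{I}^*$ itself (whether or not $\varnothing$ is returned), which your $\Delta_0$ chi-square bound covers verbatim. All three fixes are local; with them your argument coincides with the paper's proof.
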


\begin{proof}[Proof Sketch]
    The proof follows some steps similar to those in \cite{audibert2010best}. We first show that if the algorithm fails to output $\mathcal{I}^*$, then it is necessary that at some round $k$, either (i) the basis $\mathcal{I}^*$ becomes infeasible under the empirical mean costs $\hat{\mathbf{c}}$; or (ii) the basis $\mathcal{I}^*$ remains feasible, but its corresponding expected reward is overtaken by some other basis $\mathcal{J}$ under the empirical rewards $\hat{\mathbf{r}}$ and costs $\hat{\mathbf{c}}$. For either of the above events to happen, it is necessary that the empirical mean reward and costs $(\hat{\mathbf{r}}, \hat{\mathbf{c}})$ deviate from the true means $(\mathbf{r}, \mathbf{c})$ by a certain distance characterized by gaps defined in Section \ref{sec:gap}. Then, the probability of either event can be bounded with concentration inequalities. We then conclude the result with union bound.
\end{proof}
    See Appendix \ref{app:thm:main} for the detailed proof.

\begin{remark} 
Theorem \ref{thm:main} shows that the error is dominated by two components: The quantity $\Delta_0$ describes how close the optimal mixed arm is to \emph{infeasibility}, while $\TD_{(i+L)}$ describes how close the optimal mixed arm is to \emph{other candidates} for the best mixed arm. This echoes the result of \cite{faizal2022constrained} for constrained best deterministic arm identification problems. 
\end{remark}

\subsection{Lower Bound}\label{sec:LB}
As we stated above, our objective is to design a pure-exploration algorithm $\phi$ that will minimize the mis-identification probability. But how would we know that our upper bound on it is tight, or not? To characterize that, we introduce a lower bound with which the upper bound can then be compared. 

To meaningfully derive a lower bound on the performance of any CBMAI algorithm, it is essential to specify the class of instances to be considered. (It's not so difficult to design algorithms that achieve uniformly good performance on a small class of instances, since the algorithm only needs to distinguish between them.) We consider a class of instances with Gaussian rewards and costs such that the variances $\sigma_c^2, \sigma_r^2$ are fixed and known, so that an instance is parameterized by $\theta = (\theta_a)_{a\in [K]} = (r_{a}, c_{1, a}, \cdots, c_{L, a})_{a\in [K]}$. For simplicity, we consider $K_0 = K$, i.e., all arms are unknown.
We define $\varTheta$ to be a class of instances $\theta$ that either (i) has no feasible solution, or (ii) satisfies Assumption \ref{assump:unique}. For $\theta\in\varTheta$, define $\mathcal{I}^{\theta} = \varnothing$ if $\theta$ is an instance with no feasible solution. Otherwise, define $\mathcal{I}^{\theta}$ to be the optimal basis for this instance.

We consider a class of algorithms that satisfy the following \emph{consistency} requirement.

\begin{definition}[Consistency \citep{barrier2023best}]
    Let $\phi_N$ be an algorithm for CBMAI with budget $N$. A sequence of algorithms $(\phi_N)_{N=N_0}^\infty$ is said to be \emph{consistent} if for any instance $\theta\in\varTheta$, $\Pr_{\theta}(\mathcal{X}^{\phi_N} \neq \mathcal{I}^{\theta}) \rightarrow 0$ as $N\rightarrow+\infty$.
\end{definition}

The consistency condition means that given sufficient amount of budget, an algorithm can eventually: (i) identify the optimal basis when it is possible to do so, and (ii) output $\varnothing$ whenever the instance has no feasible solution. 

It can be shown that the naive Uniform Sampling and Linear Program (\texttt{USLP}) algorithm (i.e., pull each arm $\lfloor N/K\rfloor$ times, compute the empirical means of rewards and costs of all arms, and then solve the empirical version of \eqref{eq:SFLP}) is a consistent algorithm. We note that the \fcnm{} algorithm is a consistent algorithm. 

\begin{theorem}\label{prop:lowerbound}
    For any consistent algorithm $\phi_N$, under any instance $\theta$ satisfying Assumption \ref{assump:unique}, the mis-identification probability satisfies
    \begin{equation}
        \limsup_{N\rightarrow\infty} -\dfrac{1}{N} \log\Pr_{\theta}(\mathcal{X}^{\phi_N} \neq \mathcal{I}^{\theta}) \leq \dfrac{1}{2}\min\{\Delta_0^2, \Delta_{(L+2)}^2\}.
    \label{eq:lb}
    \end{equation}

    Furthermore, for the \texttt{USLP} algorithm, we have
    \begin{equation}
        \limsup_{N\rightarrow\infty} -\dfrac{1}{N} \log\Pr_{\theta}(\mathcal{X}^{\mathrm{USLP}_N} \neq \mathcal{I}^{\theta}) \leq \dfrac{1}{2K}\min\{\Delta_0^2, \Delta_{(L+2)}^2\}
    \label{eq:ub-uslp}
    \end{equation}
\end{theorem}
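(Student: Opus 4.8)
The plan is to establish both bounds using the standard information-theoretic approach for lower bounds on fixed-budget identification, namely a change-of-measure (transportation) argument. For the first bound \eqref{eq:lb}, I would fix an instance $\theta$ satisfying Assumption \ref{assump:unique} and consider a confusing alternative instance $\theta'$ for which the correct answer differs, i.e.\ $\mathcal{I}^{\theta'} \neq \mathcal{I}^{\theta}$ (or $\theta'$ is infeasible while $\theta$ is not). The key tool is the classical lemma (e.g.\ from \cite{kaufmann2016complexity}) relating the KL-divergence of the observation laws under the two instances to the error probabilities: for any event $\mathcal{E}$, $\mathrm{KL}(\Pr_\theta, \Pr_{\theta'}) \ge \mathrm{kl}(\Pr_\theta(\mathcal{E}), \Pr_{\theta'}(\mathcal{E}))$, where $\mathrm{kl}$ is the binary relative entropy. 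Taking $\mathcal{E} = \{\mathcal{X}^{\phi_N} = \mathcal{I}^{\theta}\}$ and using consistency (so $\Pr_\theta(\mathcal{E}) \to 1$ and $\Pr_{\theta'}(\mathcal{E}) \to 0$), the right-hand side behaves like $-\log \Pr_{\theta'}(\mathcal{X}^{\phi_N} \neq \mathcal{I}^{\theta'})$ up to lower-order terms.

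The next step is to compute the total KL-divergence of the samples. Since rewards and costs are independent Gaussians with fixed variances $\sigma_r^2, \sigma_c^2$, if arm $a$ is pulled $N_a$ times then its contribution to the divergence is $\tfrac{N_a}{2}\big(\sigma_r^{-2}(r_a - r_a')^2 + \sigma_c^{-2}\sum_l (c_{l,a} - c'_{l,a})^2\big)$; summing over $a$ and using $\sum_a N_a \le N$ gives $\mathrm{KL}(\Pr_\theta,\Pr_{\theta'}) \le \tfrac{N}{2}\max_a \big(\sigma_r^{-2}(r_a-r'_a)^2 + \sigma_c^{-2}\sum_l(c_{l,a}-c'_{l,a})^2\big)$. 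Combining with the transportation inequality yields $\liminf_N -\tfrac1N\log\Pr_\theta(\mathrm{error}) \le \tfrac12 \inf_{\theta'} \max_a(\cdots)$ over confusing alternatives $\theta'$. The remaining work is to show that the infimum of this per-arm-max distance over all confusing $\theta'$ equals exactly $\tfrac12\min\{\Delta_0^2, \Delta_{(L+2)}^2\}$. The two cases correspond precisely to the two ways $\theta$ can be confused: moving $\mathbf{c}$ to make $\mathcal{I}^*$ infeasible (captured by $\Delta_0^2$ in \eqref{eq:Delta0}) or moving $(\mathbf{r},\mathbf{c})$ so that some competing basis $\mathcal{J}$ overtakes $\mathcal{I}^*$ while both stay feasible (captured by $\Delta_{\mathcal{J}}^2$ in \eqref{eq:deltaJ}, minimized to give $\Delta_{(L+2)}^2$). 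For the second bound \eqref{eq:ub-uslp}, the same argument applies but now $N_a = \lfloor N/K\rfloor$ is forced for \emph{every} arm by the \texttt{USLP} sampling rule, so the per-arm-max becomes a sum-over-arms scaled by $1/K$, producing the extra factor $1/K$.

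The main obstacle I anticipate is the reconciliation between the divergence definitions: the quantities $\Delta_0^2$ and $\Delta_{\mathcal{J}}^2$ in \eqref{eq:Delta0}--\eqref{eq:deltaJ} are defined as \emph{sums} of squared perturbations over all arms (an $\ell_2$-type transportation cost), whereas the lower-bound argument naturally produces a \emph{max} over arms (because any single confusing $\theta'$ can be tracked by pulling only its most-perturbed arm). The reason these coincide for the general bound is subtle: a lower bound only needs \emph{one} confusing alternative, and the optimal $\theta'$ in \eqref{eq:deltaJ}--\eqref{eq:Delta0} perturbs the arms in a coordinated way, but an adversary proving a lower bound can only exploit the $\max_a$ quantity. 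I would need to argue that for the minimizing $\theta'$ the dominant term, after optimizing which arm to perturb, recovers the full $\Delta^2$ because the optimal perturbation concentrates mass appropriately; more carefully, the transportation-cost formulation with $\sum_a N_a \le N$ and free allocation means the adversary solves $\inf_{\theta'} (\sum_a \text{(per-arm divergence)})$, matching the sum-of-squares definition exactly, while the $1/K$ factor in \eqref{eq:ub-uslp} arises because \texttt{USLP} removes the adversary's freedom to concentrate pulls. Getting this allocation bookkeeping precise — and confirming that the infimum over confusing instances splits cleanly into the feasibility case and the overtaking case with the stated constants — is where the careful work lies. The lower-order corrections from consistency (via continuity of $\mathrm{kl}$ near the boundary) are routine.
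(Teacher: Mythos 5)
Your skeleton is the same as the paper's: change of measure via Lemma 1 of \cite{kaufmann2016complexity}, the binary-KL bound $d_{\mathrm{KL}}(q_N',q_N)\geq q_N'\log(1/q_N)-\log 2$ combined with consistency ($q_N'\to 1$), the Gaussian KL formula, an infimum over confusing alternatives matched to the gap definitions, and the $1/K$ factor for \texttt{USLP} from the forced allocation $\E_{\theta'}[M_{a,N}]/N\to 1/K$. Two points need repair, one of them a genuine missing step.

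First, your max-versus-sum reconciliation is tangled. The transportation inequality yields $\sum_a \frac{\E_{\theta'}[M_{a,N}]}{N}\mathbf{D}_{\mathrm{KL}}(\theta_a',\theta_a)$ with nonnegative weights summing to at most $1$, so for a general algorithm the bound is $\inf_{\theta'}\max_a \mathbf{D}_{\mathrm{KL}}(\theta_a',\theta_a)$. Your claim that this ``equals exactly'' $\tfrac12\min\{\Delta_0^2,\Delta_{(L+2)}^2\}$ is false in general (the infimum of a max can be strictly smaller than the infimum of the sum), and your later assertion that free allocation means the adversary solves $\inf_{\theta'}\sum_a(\cdot)$ does not follow from the allocation bookkeeping. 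Fortunately none of this is needed: since $\max_a \leq \sum_a$, your max-form bound already implies \eqref{eq:lb}. The paper sidesteps the issue entirely by the cruder step of bounding each $\E_{\theta'}[M_{a,N}]/N$ by $1$ individually, which produces the sum-of-squared-perturbations form matching the definitions \eqref{eq:deltaJ} and \eqref{eq:Delta0} verbatim; you should either do that or invoke the one-line domination $\max\leq\sum$ and delete the equality claim. (Also, the error-exponent side should carry a $\limsup$, not the $\liminf$ you wrote.)

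Second, the genuine gap: consistency is assumed only on the class $\varTheta$ (instances that are infeasible or satisfy Assumption \ref{assump:unique}), whereas the infima defining $\Delta_0^2$ and $\Delta_{\mathcal{J}}^2$ range over all of $\mathbb{R}^{(L+1)\times K}$, and near-minimizers typically sit on boundary sets where the defining inequalities hold with equality — there $\mathcal{I}^{\theta'}$ may be ill-defined or equal to $\mathcal{I}^{\theta}$, so the change-of-measure argument cannot be applied to them directly. One must approximate such points by genuinely confusing instances inside $\varTheta$; the paper does this by proving $\varTheta$ is dense in $\mathbb{R}^{(L+1)\times K}$ (its complement lies in the zero set of a nonzero polynomial, hence contains no open ball). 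Your remark that the ``lower-order corrections from consistency \ldots are routine'' addresses the continuity of the binary KL, not this approximation step, which is the one nontrivial caveat in the paper's proof and is absent from your proposal.
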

The proof can be found in Appendix \ref{app:lowerbound}, and uses techniques inspired by \cite{kaufmann2016complexity} and \cite{barrier2023best}.

For Algorithm \ref{algo:ivsr},  Theorem \ref{thm:main} yields that
\begin{equation}
\begin{split}&\quad~\liminf_{N\rightarrow\infty} -\dfrac{1}{N} \log\Pr_{\theta}(\mathcal{X}^{\mathrm{\fcnm}_N} \neq \mathcal{I}^{\theta}) \\
    &\geq \dfrac{1}{3\left(\frac{L+1}{2}+\log K\right)} \min\left\{ \frac{\Delta_0^2}{K}, \frac{\Delta_{(L+2)}^2}{2}, \frac{\Delta_{(L+3)}^2}{3}, \cdots, \frac{\Delta_{(L+K)}^2}{K} \right\}.
\end{split}
\label{eq:ub-sfsr}
\end{equation}
We can now compare the upper bound for the \fcnm ~algorithm above to the lower bound in \eqref{eq:lb} and observe the presence of several common terms. Note that \textit{tight} instance-dependent lower bounds for fixed budget identification problems are not known even for unconstrained BAI problems \citep{qin2022open}. So, while the lower bound we provide in  Theorem \ref{prop:lowerbound} may not be tight, it does show that the gaps $\TD_0$ and $(\TD_{(L+i)})_{i=2}^K$ are appropriate indicators of the hardness of a CBMAI problem: If one of them is very small, then the CBMAI problem is difficult for any consistent algorithm to handle. 
Instance independent min-max lower bound of the type in \cite{carpentier2016tight} can also be derived but are not very meaningful for this problem since one can always construct hard CBMAI instances by translating hard unconstrained BAI problems. 


\section{Empirical Performance}\label{sec:empirical}
In this section, we compare the empirical performance of the two flavors of \fcnm{} with the naive \texttt{USLP} algorithm. We only presented instances with $L=1$ here. Empirical results for instances with more than one constraints are included in Appendix \ref{app:moreexp}.

\begin{figure}[!ht]
    \centering
    \includegraphics[width=0.75\textwidth]{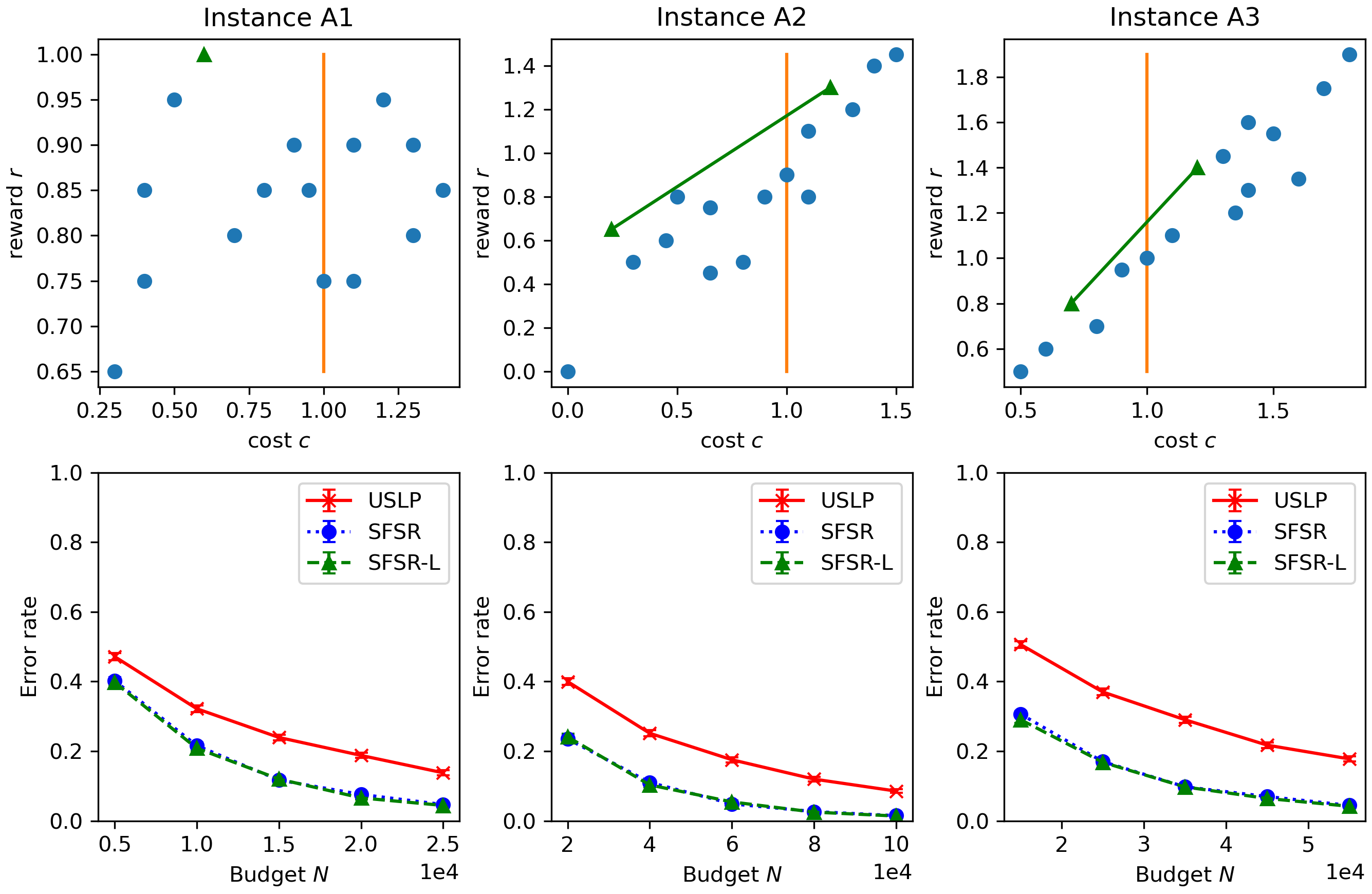}
    \caption{Top: Three 16-arm instances. Arms in the optimal support are labeled with green triangles. Bottom: Empirical results for three algorithms under varying budgets. 95\% confidence intervals are indicated and tight. }
    \label{fig:expres:rand}
\end{figure}

\begin{figure}[!ht]
    \centering
    \includegraphics[width=0.75\textwidth]{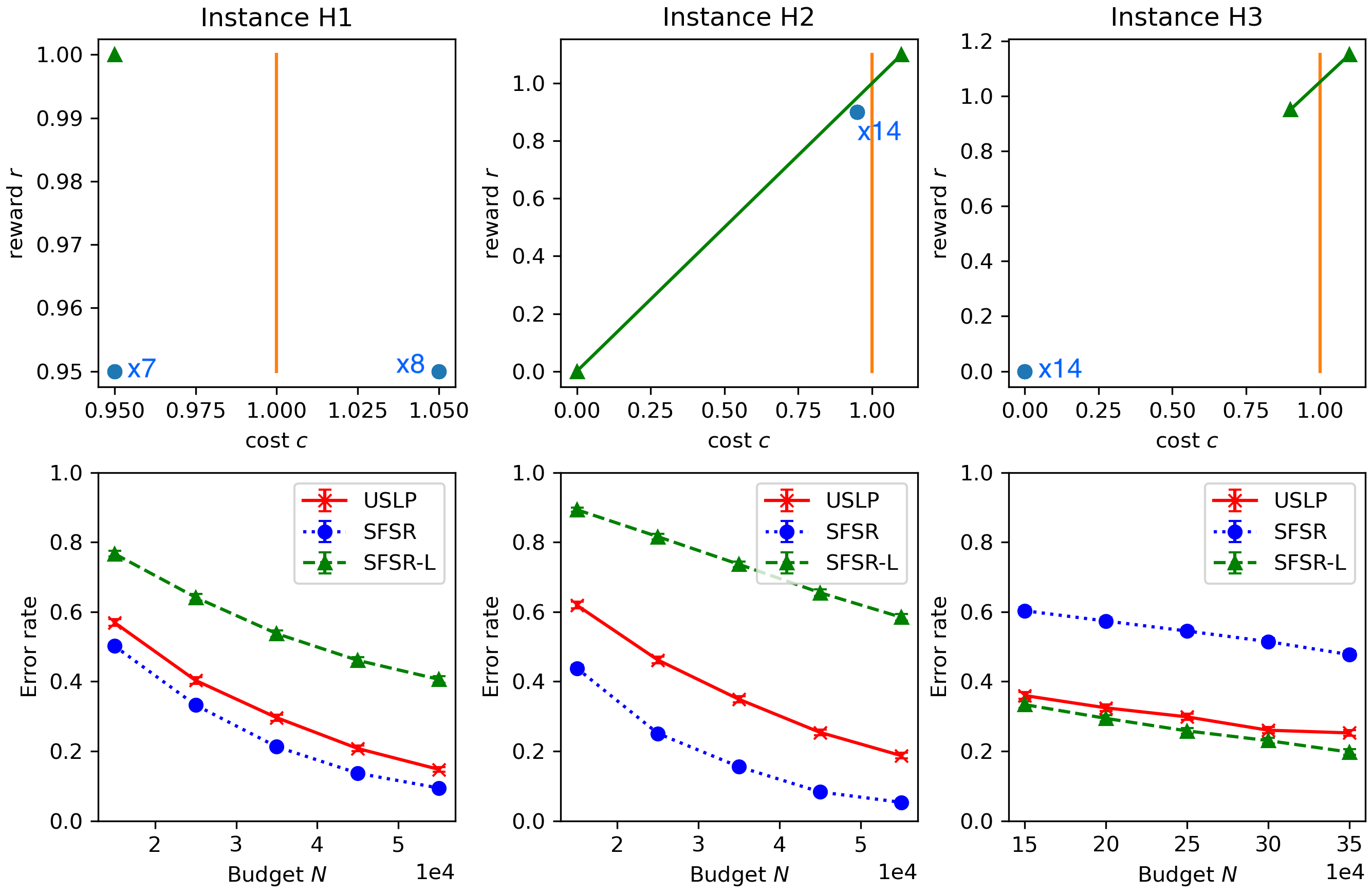}
    \caption{Top: Three hard 16-arm instances. Arms in the optimal support are labeled with green triangles. Bottom: Empirical results for three algorithms under varying budgets. 95\% confidence intervals are indicated and tight.}
    \label{fig:expres:path}
\end{figure}
In all of the experiments, we set $K_0=K=16, \sigma_r = 1, \sigma_c = 0.5,$ and $\Bar{c}_1 = 1$. For each combination of instance-algorithm-budget, we conduct 10,000 independent runs to obtain the error rate as the proportion of times the algorithm produces the wrong support. In every figure, we added (tiny) error bars to represent a $95\%$ confidence interval for the error rate. We implemented the experiments in Python and conducted the experiments on an Apple M1 MacBook Air. Each figure takes about 15 minutes to generate.

We first consider three arbitrary instances $A1, A2$ and $A3$ in Figure \ref{fig:expres:rand}, where certain arms are clearly sub-optimal while others are not. The instances and their corresponding results are shown in Figure \ref{fig:expres:rand}. This includes one instance where the optimal arm is deterministic, and two instances where the optimal arm is a strict mix of two arms. The baseline is the Uniform Sampling and Linear Program (\texttt{USLP}) algorithm that pulls each arm $\lfloor N/K\rfloor$ times, computes the empirical means of rewards and costs of all arms, and then solves  the empirical version of \eqref{eq:SFLP}) (there is no other known algorithm in the literature or otherwise). The results show that both \fcnm{} and \fcnmL{} clearly outperform \texttt{USLP}, and the two flavors of \fcnm{} have nearly no discernible difference in performance. Furthermore, we also observe that the error rate decreases exponentially in $N$.

While both flavors of \fcnm{} can achieve good performance on an average instance, in Figure \ref{fig:expres:path}, we see that in certain carefully constructed hard instances ($H1$-$H3$) while one of the two flavors, \fcnm{} or \fcnmL{} performs well, the other does not (the guarantee in any case is probabilistic). In fact, the $H3$ instance in Figure \ref{fig:expres:path} shows that it is hard enough that \fcnmL{} struggles to perform much better than the \texttt{USLP} algorithm. In Appendix \ref{app:pdsrfail}, we provide a detailed explanation on why \fcnmL{} fails on instance $H2$.

\section{Conclusions}
\label{sec:conclusions}

In this paper, we introduced the constrained best mixed arm identification (CBMAI) problem. While prior work has considered such a constrained problem with deterministic arms, it is well known that one can do better allowing for mixed arms. Unfortunately, the mixed arm problem is much more challenging due to there being uncountably many of them. We have proposed the first algorithm for the CBMAI problem that we are able to show theoretically and empirically also has very good performance in terms of the error probability decreasing exponentially in $N$. The problem is of wide interest in many practical settings that often have multiple objectives but with unknown reward and cost models. For example, the cost attributes can even be related to AI safety (e.g., do not recommend offensive videos, generate undesirable images, etc.). 

Our work provides a basis for further extensions in a number of directions: One could consider contextual bandit models, e.g., linear bandit models that have wide applicability in recommendation systems. A fixed confidence version of CBMAI problem is also interesting, which has only been solved under the known constraints case \cite{carlsson2023pure}. This work could also be extended to an Constrained MDP setting to find the best policy that also obeys average constraints. 
There is  probably also some scope to design an even better algorithm by combining various score functions. 

We hope this paper will inspire more work in this direction that is important for so many applications.

\newpage

\printbibliography

\newpage

\appendix

\section{Example of failure of the \fcnmL{} Algorithm}\label{app:pdsrfail}

\begin{figure}[!ht]
    \centering\small
    \begin{tikzpicture}
        \coordinate (CBOUND) at (1, -0.5);
        \coordinate (CARROW) at (2.5, -0.5);
        \coordinate (RARROW) at (-0.5, 2.0);
        \draw[->,thick] (RARROW |- CARROW) -- (CARROW) node[anchor=north] {$c$};
        \draw[->,thick] (RARROW |- CARROW) -- (RARROW) node[anchor=west] {$r$};
        \draw[dashed,name path=vrt] (CBOUND) -- (CBOUND |- RARROW) node[anchor=south] {$\costbound$};
        \coordinate (A) at (-0.2, -0.2);
        \coordinate (B) at (1.2, 1.2);
        \coordinate (C) at (0.9, 0.75);
        \draw[thick,green,name path=AB] (-0.5, -0.5) -- (B);
        \foreach \x in {C}
        \draw[red] (\x) -- (intersection of \x--\x|-RARROW and A--B);
        \draw[fill=green] (A) circle [radius=0.05] node[anchor=south] {$1$};
        \draw[fill=green] (B) circle [radius=0.05] node[anchor=south] {$2$};
        \draw[fill=blue] (C) circle [radius=0.05] node[anchor=north west] {};
        \draw ($(C) + (0.2, -0.2)$) node[anchor=north west] {$3,\cdots, K$};
        \draw[->] ($(C) + (0.3, -0.3)$) -- ($(C) + (0.1, -0.1)$);
    \end{tikzpicture}
    \qquad
    \begin{tikzpicture}
        \coordinate (CBOUND) at (1, -0.5);
        \coordinate (CARROW) at (2.5, -0.5);
        \coordinate (RARROW) at (-0.5, 2.0);
        \draw[->,thick, name path=cax] (RARROW |- CARROW) -- (CARROW) node[anchor=north] {$\hat{c}$};
        \draw[->,thick, name path=rax] (RARROW |- CARROW) -- (RARROW) node[anchor=west] {$\hat{r}$};
        \draw[dashed,name path=vrt] (CBOUND) -- (CBOUND |- RARROW) node[anchor=south] {$\costbound$};
        \coordinate (A) at (-0.2, -0.2);
        \coordinate (B) at (1.25, 1.10);
        \coordinate (C) at (0.9, 0.75);
        \coordinate (C1) at ($(C) + (-0.08, 0.105)$);
        \coordinate (C1E) at (intersection of C1--B and RARROW|- CARROW--RARROW);
        \coordinate (C2) at ($(C) + (0.05, -0.1)$);
        \coordinate (C3) at ($(C) + (-0.1, -0.02)$);
        \coordinate (C4) at ($(C) + (0.1, 0.1)$);
        \draw[thick,green,name path=BC1] (B) -- (C1E);
        \foreach \x in {A, C, C2, C3, C4}
        \draw[red] (\x) -- (intersection of \x--\x|-RARROW and B--C1E);
        \draw[fill=green] (A) circle [radius=0.05] node[anchor=west] {$1$};
        \draw[fill=green] (B) circle [radius=0.05] node[anchor=south] {$2$};
        \draw[fill=blue] (C) circle [radius=0.05] node[anchor=north west] {};
        \draw[fill=blue] (C2) circle [radius=0.05] node[anchor=north west] {};
        \draw[fill=blue] (C3) circle [radius=0.05] node[anchor=north west] {};
        \draw[fill=blue] (C4) circle [radius=0.05] node[anchor=north west] {};
        \draw[fill=blue] (C1) circle [radius=0.05] node[anchor=south east] {};
        \draw ($(C1) + (-0.1, 0.1)$) node[anchor=south east] {$j$};
        \draw[->] ($(C1) + (-0.2, 0.2)$) -- ($(C1) + (-0.08, 0.08)$);
    \end{tikzpicture}
    \caption{Illustration of \fcnmL{} in a 1-constraint instance. Left: True mean reward and cost. Right: Empirical means after the first episode. Despite that the empirical means do not deviate from the the true mean by too much, arm 1 (a member of the optimal support) ends up having the lowest empirical Lagrangian reward, and is eliminated as a result.}
    \label{fig:pdsrproblem}
\end{figure}
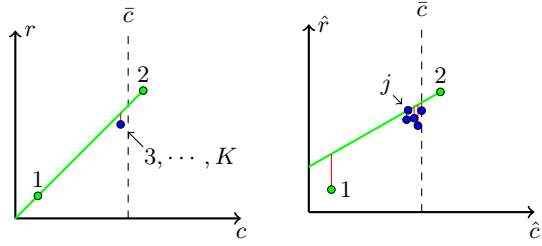

Consider a CBMAI instance with $K$ arms and one type of cost. The mean reward and cost are shown as in the left of Figure \ref{fig:pdsrproblem}: Arm 1 has low reward and low cost, arm 2 has high reward and near feasible cost, and arm 3 to $K$ all have the same mean reward and cost: The cost is feasible but close to cost bound $\costbound$, and the reward is chosen such that the best mixed arm is formed by a mixture of arm 1 and 2. 
The (negative) Lagrangian reward $f_a^{\mathrm{L}}(\hat{\mathbf{r}}, \hat{\mathbf{c}})$ of an arm $a\in [K]$ can be visualized in Figure \ref{fig:pdsrproblem} as the vertical distance between arm $a$ to the ``frontier'' (i.e. the extended line formed by cost-reward vectors of two arms in the empirical optimal support).

Now, consider the end of episode 1 of \fcnmL{}, and the empirical means of rewards and costs are shown as on the right of Figure \ref{fig:pdsrproblem}. Now, arm 2 and some arm $3\leq j\leq K$ forms the empirical frontier. The empirical dual optimal solution (symbolized by the slope of the frontier) is very different from the true dual optimizer. More importantly, the empirical Lagrangian reward of arm 1 is now the lowest among all arms, and arm 1 is rejected by the \fcnmL{} algorithm in round 1 as a result. Note the identification error happens despite the fact that the arm 1 did not underperform (i.e. $\hat{r}_1 < r_1, \hat{c}_1 > c_1$) its mean.

While in elimination style algorithms there's always the possibility of erroneously rejecting optimal arms, we note that the type of event as shown on the right of Figure \ref{fig:pdsrproblem} is not unlikely: We only require \emph{one of} the $K-2$ arms to slightly outperform its true mean for arm 1 to be eliminated. In comparison, in unconstrained BAI problems, for the optimal arm (arm 1) to be rejected in episode 1 in an elimination-style algorithm \citep{audibert2010best,karnin2013almost}, it requires \emph{all of} the other arms (including the worst arm) to empirically outperform arm 1.

\section{Proofs of Propositions and Theorems}
\label{app:proofs}

\subsection{Auxilliary Results}\label{app:aux}
\begin{lemma}\label{lem:chi2}
    Let $\chi_m^2$ be a chi-squared variable with degree $m$ and $t > 0$, then $\Pr(\chi_m^2 \geq t) \leq 3^{m/2}\exp(-t/3)$. 
\end{lemma}

\begin{proof}
    The moment generating function of $\chi_m^2$ is given by $\E[\exp(\zeta \chi_m^2)] = (1-2\zeta)^{-m/2}$ for $\zeta < 1/2$. Through Markov Inequality we have
    \begin{align}
        \Pr(\chi_m^2 \geq t) &\leq \E[\exp(\zeta \chi_m^2)] e^{-\zeta t} = (1-2\zeta)^{-m/2} e^{-\zeta t}
    \end{align}
    The proof is completed by picking $\zeta = 1/3$.
\end{proof}

\subsection{Proof of Proposition \ref{prop:budget}}\label{app:prop:budget}
Define $n_{k} = n_{K-1}$ for $k> K-1$. Imagine that in each episode, the algorithm pulls all arms still remaining in $\armset$ (including virtual arms and known arms). Then the total number of arm pulls is $\sum_{k=1}^{K+L} n_k$. 

If an ``arm'' $i$ is the $k$-th rejected arm, then it is pulled exactly $n_k$ times. If it is not rejected, then it is pulled $n_{K-1}$ times.
Hence to obtain the actual total number of arm pulls, we only need to subtract those $n_k$'s corresponding to virtual arms and known arms from the summation.
    We conclude that the total number of arm pulls is at most
    \begin{align}
        &\quad~ \max_{\substack{\mathcal{J}\subset [K+L]\\|\mathcal{J}| = K - K_0 + L}}\left( \sum_{k=1}^{K+L} n_k - \sum_{k\in \mathcal{J}} n_k\right) = \sum_{k=K-K_0 + L + 1}^{K+L} n_k\\
        &\leq \sum_{k=K-K_0 + L + 1}^{K+L}\left( 1 + \dfrac{1}{\Psi(K_0, L)} \dfrac{N-K_0}{\max(2, K + 1 - k)} \right)  \\
        &= K_0 + (N - K_0) \cdot \dfrac{1}{\Psi(K_0, L)} \sum_{j= L}^{K_0+L-1} \dfrac{1}{\max(2, K_0 - j)} = N
    \end{align}

\subsection{Proof of Proposition \ref{prop:Deltanonzero}}\label{app:prop:Deltanonzero}
    The ``if'' part is clear by the definition of $\mathcal{I}^*$. We establish the ``only if'' part as follows.

    Consider a basis $\mathcal{J}\neq \mathcal{I}^*$. Suppose that $\TD_{\mathcal{J}}^2 = 0$. Then, there exists a sequence of reward-cost vectors $(\Tilde{r}^{(n)}, \Tilde{c}^{(n)})_{n=1}^\infty$ such that both of the following hold: (i) $(\Tilde{r}^{(n)}, \Tilde{c}^{(n)}) \rightarrow (r, c)$; (ii) $(\Tilde{\mathbf{A}}_{\mathcal{I}^*}^{(n)})^{-1} \mathbf{b}\geq 0, (\Tilde{\mathbf{A}}_{\mathcal{J}}^{(n)})^{-1} \mathbf{b}\geq 0,~ (\Tilde{\mu}_{\mathcal{J}}^{(n)})^T (\Tilde{\mathbf{A}}_{\mathcal{J}}^{(n)} )^{-1} \mathbf{b} \geq (\Tilde{\mu}_{\mathcal{I}^*}^{(n)})^T (\Tilde{\mathbf{A}}_{\mathcal{I}^*}^{(n)})^{-1} \mathbf{b}$. 

    Set $\mathbf{x}^{(n)}\in \mathbb{R}^{K+L}$ to be the basic feasible solution of $(\Tilde{\mathbf{A}}^{(n)}, \mathbf{b})$ corresponding to basis $\mathcal{J}$, i.e. $\mathbf{x}_{\mathcal{J}}^{(n)}= (\Tilde{\mathbf{A}}_{\mathcal{J}}^{(n)})^{-1} \mathbf{b}$ and $\mathbf{x}_{i}^{(n)} = 0$ for $i\not\in \mathcal{J}$. Note that $(\mathbf{x}^{(n)})_{n=1}^\infty$ is a uniformly bounded sequence of finite dimensional vectors. By taking subsequences, without lost of generality, assume that $\mathbf{x}^{(n)}\rightarrow \mathbf{x}^{(\infty)}$. We have
    \begin{equation}
        \mathbf{x}^{(\infty)}\geq 0,\quad \mathbf{A}\mathbf{x}^{(\infty)} = \lim_{n\rightarrow\infty} (\Tilde{\mathbf{A}}_{\mathcal{J}}^{(n)}) \mathbf{x}_{_{\mathcal{J}}}^{(n)} = \mathbf{b},
    \end{equation}
    meaning that $\mathbf{x}^{(\infty)}$ is a feasible solution of \eqref{eq:SFLP}. By taking the limit of the last inequality in (ii) we have 
    \begin{equation}\label{eq:xinfinityoptimal}
        \mu^T \mathbf{x}^{(\infty)} = \lim_{n\rightarrow\infty} (\Tilde{\mu}_{\mathcal{J}}^{(n)})^T (\Tilde{\mathbf{A}}_{\mathcal{J}}^{(n)} )^{-1} \mathbf{b} \geq \limsup_{n\rightarrow\infty} (\Tilde{\mu}_{\mathcal{I}^*}^{(n)})^T (\Tilde{\mathbf{A}}_{\mathcal{I}^*}^{(n)})^{-1} \mathbf{b}.
    \end{equation}

    Under Assumption \ref{assump:unique}, $\mathbf{A}_{\mathcal{I}^*}$ is invertible and hence the mapping $\tilde{c}\rightarrow \Tilde{\mathbf{A}}_{\mathcal{I}^*}^{-1}$ is continuous at $\Tilde{\mathbf{c}} = \mathbf{c}$. Therefore, through (i) we conclude that $\limsup_{n\rightarrow\infty} (\Tilde{\mu}_{\mathcal{I}^*}^{(n)})^T (\Tilde{\mathbf{A}}_{\mathcal{I}^*}^{(n)})^{-1} \mathbf{b} = \mu_{\mathcal{I}^*}^T \mathbf{A}_{\mathcal{I}^*}^{-1} \mathbf{b}$, i.e. the optimal value of \eqref{eq:SFLP}. Therefore, \eqref{eq:xinfinityoptimal} means that $\mathbf{x}^{(\infty)}$ is also an optimal solution of \eqref{eq:SFLP}, which contradicts with the uniqueness assumption. (Let $i\in \mathcal{I}^*\backslash \mathcal{J}$, we have $\mathbf{x}_i^{(\infty)} = 0 \neq \mathbf{x}_i^*$ and hence $\mathbf{x}^{(\infty)}\neq \mathbf{x}^*$.)

\subsection{Proof of Theorem \ref{thm:main}}\label{app:thm:main}
The proof follows the general strategy first introduced in \cite{audibert2010best} for analyzing fixed-budget BAI algorithms that reject arms successively. As with \cite{audibert2010best}, we assume that an infinite reward and cost sequence for each unknown arm $a\in [K_0]$ is drawn before the algorithm started. In this way, the empirical mean reward or cost of arm $a$ after $m$ draws is always well-defined.
    
Let $\armset_k$ denote the set of remaining arms after $k-1$ arms (including virtual arms) are eliminated. Recall that $(k)$ denotes the arm (including virtual arms) with the $k$-th smallest $\TD_a$ among all arms. At least one of the arms $a\in\{(K+L-k+1),\cdots,(K+L)\}$ is in $\armset_k$. If one of the arms in $\mathcal{I}^*$ is eliminated at the end of round $k$ for the first time, it implies that the following event $\mathcal{E}_k$ happened:
\begin{equation}\label{eq:orderviolation}
    \mathcal{I}^*\subset \armset_k, \quad\exists a \in \{(K+L-k+1),\cdots,(K+L)\}\cap \armset_k,  \quad \hat{\rho}_a^k\geq \min_{i\in\mathcal{I}^*} \hat{\rho}_i^k
\end{equation}
where $\hat{\rho}_i^k$ is the intersection value score for arm $i$ at the end of round $k$.

Next, fix $k$. Let $\hat{\mathbf{r}}\in\mathbb{R}^{K_0}, \hat{\mathbf{c}}\in \mathbb{R}^{L\times K_0}$ be the empirical means of rewards and costs respectively after each unknown arm has been drawn $n_k$ times. Let $\hat{\mathbf{A}}\in\mathbb{R}^{(L+1)\times (K+L)} $ denote the empirical version of the $\mathbf{A}$ matrix where $\mathbf{A}_{l, i} = c_{l,i}$ is replaced by $\hat{c}_{l, i}$ for $l\in [L], i\in [K_0]$.

If \eqref{eq:orderviolation} happens, at least one of the following events $\mathcal{E}_{k, 0}, (\mathcal{E}_{k, a})_{a \in\{(K+L-k+1), \cdots, (K+L) \}}$ must happen:
\begin{align}
    \mathcal{E}_{k, 0} &:= \{\det(\hat{\mathbf{A}}_{\mathcal{I}^*}) = 0, \text{ or } \hat{\mathbf{A}}_{\mathcal{I}^*}^{-1} \mathbf{b} \not\geq 0 \}\\
    \mathcal{E}_{k, a} &:= \{\mathcal{I}^*\subset \armset_k, a\in\armset_k,\hat{\mathbf{A}}_{\mathcal{I}^*}^{-1} \mathbf{b}\geq 0,~\hat{\rho}_a^k \geq \min_{i\in\mathcal{I}^*} \hat{\rho}_i^k \}
\end{align}

    On event $\mathcal{E}_{k, 0}$, by definition of $\Delta_0$ in \eqref{eq:Delta0}, we have
    \begin{equation}
        \xi_0:=\sigma_c^{-2}\sum_{i\in\mathcal{I}^*\cap [K_0] } \sum_{l=1}^L(c_{l,i} - \hat{c}_{l,i})^2 \geq \Delta_0^2
    \end{equation}
    
    For each $(l, i) \in [L]\times[K_0]$, the random variables $\sqrt{n_k} \sigma_c^{-1}(\hat{c}_{l, i} - c_{l, i})$ are i.i.d. standard normal random variables. Therefore, $n_k \xi_0$ is a chi-square random variable with degree $m = L\cdot|\mathcal{I}^*\cap [K_0]|$ (written as $\chi_m^2$). We have
    \begin{align}
        \Pr(\mathcal{E}_{k, 0}) &\leq \Pr(\chi_m^2 \geq n_k \Delta_0^2)\leq 3^{m/2}\exp\left(-\dfrac{n_k\Delta_0^2}{3}\right) \leq 3^{L(L+1)/2}\exp\left(-\dfrac{n_k\Delta_0^2}{3}\right)
    \end{align}
    where in the second inequality we applied Lemma \ref{lem:chi2}, and in the third inequality we used the fact that $m\leq L|\mathcal{I}^*| = L(L + 1)$.
    
    Now consider the event $\mathcal{E}_{k, a}$ for some $a \in\{(K+L-k+1), \cdots, (K+L) \}$. On this event, $\mathcal{I}^*$ corresponds to a BFS of $(\hat{\mathbf{A}}_{\armset_k}, \mathbf{b})$. Then, by definition of the IV scoring function \eqref{eq:IVscore}, for all $i\in\mathcal{I}^*$, we have $\hat{\rho}_i^k \geq \hat{\mu}_{\mathcal{I}^*}^T \hat{\mathbf{A}}_{\mathcal{I}^*}^{-1}\mathbf{b}$. Therefore, on event $\mathcal{E}_{k, a}$, we have $\hat{\rho}_a^k \geq \hat{\mu}_{\mathcal{I}^*}^T \hat{\mathbf{A}}_{\mathcal{I}^*}^{-1}\mathbf{b}$. Again, by definition of the scoring function, this means that there exists a basis $\mathcal{J}\subset [K+L], |\mathcal{J}| = L + 1$ such that $a\in\mathcal{J}$ and
    \begin{equation}
        \hat{\mathbf{A}}_{\mathcal{J}}^{-1} \mathbf{b}\geq 0,\quad \hat{\mu}_{\mathcal{J}}^T \hat{\mathbf{A}}_{\mathcal{J}}^{-1}\mathbf{b} \geq \hat{\mu}_{\mathcal{I}^*}^T \hat{\mathbf{A}}_{\mathcal{I}^*}^{-1}\mathbf{b}.
    \end{equation}
    
    Therefore, by the definition of $\TD_a$ in \eqref{eq:Deltaa}, the above implies that
    \begin{equation}
    \xi_{\mathcal{J}}:= \sum_{i\in(\mathcal{I}^*\cup\mathcal{J})\cap[K_0] }\left[ \sigma_r^{-2}(\hat{r}_i - r_i)^2 + \sigma_c^{-2}\sum_{l=1}^L (\hat{c}_{l, i} - c_{l, i})^2 \right] \geq \Delta_a^2.
    \end{equation}

    The random variables $\sqrt{n_k} \sigma_r^{-1}(\hat{r}_{i} - r_{i}), i\in[K_0]$ along with the random variables $\sqrt{n_k} \sigma_c^{-1}(\hat{c}_{l, i} - c_{l, i}), (l, i) \in [L]\times[K_0]$ are i.i.d. standard normal random variables. Therefore, $n_k \xi_{\mathcal{J}}$ is a chi-square random variable with degree $m_{\mathcal{J}} = (L+1)\cdot|(\mathcal{J}\cup \mathcal{I}^*)\cap [K_0]|$. We have $m_{\mathcal{J}} \leq (L+1)(|\mathcal{J}| + |\mathcal{I}^*|) = 2(L+1)^2$. Subsequently,
    \begin{align}
        \Pr(\mathcal{E}_{k, a}) &\leq \sum_{\mathcal{J}: |\mathcal{J}| = L + 1 , a\in \mathcal{J} } \Pr(\xi_{\mathcal{J}} \geq \Delta_a^2) \leq \sum_{\mathcal{J}: |\mathcal{J}| = L + 1 , a\in \mathcal{J} } \Pr(\chi_{m_{\mathcal{J}}}^2 \geq n_k\Delta_a)\\
        &\leq \binom{K+L-1}{L} 3^{(L+1)^2} \exp\left(-\dfrac{n_k\Delta_a^2}{3} \right)\tag{Lemma \ref{lem:chi2}, and $m_{\mathcal{J}}\leq 2(L+1)^2$}
    \end{align}

Therefore,
\begin{align}
    \Pr(\mathcal{E}_k) &\leq \Pr(\mathcal{E}_{k, 0}) + \sum_{a\in \{(K+L-k+1), \cdots, (K+L)\} }\Pr(\mathcal{E}_{k, a}) \\
    &\leq 3^{L(L+1)/2} \exp\left(-\dfrac{1}{3}n_k \Delta_0^2\right) + k \binom{K+L-1}{L} 3^{(L+1)^2} \exp \left(-\dfrac{1}{3}n_k \Delta_{(K+L+1-k)}^2 \right)
\end{align}

Finally, taking union bound and using $n_k \geq \frac{1}{\Psi(K_0, L)} \frac{N-K_0}{K+1-k}$, we have
    \begin{align}
        &\quad~\Pr(\armset^{\fcnm{}} \neq \mathcal{I}^*) \leq \sum_{k=1}^{K-1} \Pr(\mathcal{E}_k)\\
        &\leq \sum_{k=1}^{K-1}\left[ 3^{L(L+1)/2}\exp\left(-\dfrac{1}{3}n_k \Delta_0^2\right) + k \binom{K+L-1}{L} 3^{(L+1)^2} \exp \left(-\dfrac{1}{3}n_k \Delta_{(K+L+1-k)}^2 \right)\right] \\
        &\leq (K-1) 3^{L(L+1)/2} \exp\left(-\dfrac{(N-K_0)\Delta_0^{2}}{3(L/2 + \log K_0) K }\right) \\
        &\quad~+ \dfrac{K(K+1)}{2} \binom{K+L-1}{L} 3^{(L+1)^2}\exp\left(-\dfrac{N-K_0}{3(L/2 + \log K_0)} \min_{2\leq i\leq K} \dfrac{\Delta_{(L+i)}^2}{i} \right)
    \end{align}
    where in the last inequality we used
    \begin{align}
        n_k &\geq \dfrac{N-K_0}{\Psi(K_0, L)} \dfrac{1}{K+1-k} \qquad \forall k\in [K-1],\\
        \Psi(K_0, L) &\leq \dfrac{L+1}{2} + \sum\limits_{2 \le j \le  K_0 - L} \dfrac{1}{j} \leq \dfrac{L+1}{2} + \int_1^{K_0} \dfrac{1}{t}\mathrm{d} t = \dfrac{L+1}{2} + \log K_0
    \end{align}
\subsection{Proof of Theorem \ref{prop:lowerbound}}\label{app:lowerbound}
Let $\theta$ satisfy Assumption \ref{assump:unique}. Consider an alternative (not necessarily feasible) instance $\theta'\in \varTheta$ with $\mathcal{I}^{\theta} \neq \mathcal{I}^{\theta'}$. By the consistency of $\pi_N$ we have
\begin{align}
    q_N'&:=\Pr_{\theta'}(\mathcal{X}^{\pi_N} \neq \mathcal{I}^{\theta})\xrightarrow{N\rightarrow\infty}1\label{eq:prooflb:qNprime}\\
    q_N&:=\Pr_{\theta}(\mathcal{X}^{\pi_N} \neq \mathcal{I}^{\theta})\xrightarrow{N\rightarrow\infty}0\label{eq:prooflb:qN}
\end{align}

Let $M_{a, N}$ denote the random number of times arm $a$ is pulled under algorithm $\pi_N$. Through Lemma 1 of \cite{kaufmann2016complexity} (which forms the foundation of numerous lower bounds related to bandit problems in the literature), we have
\begin{equation}\label{eq:prooflb:dpineq}
    \sum_{a=1}^{K} \E_{\theta'}[M_{a, N}] \mathbf{D}_{\mathrm{KL}}(\theta_a', \theta_a) \geq d_{\mathrm{KL}}(q_N', q_N)
\end{equation}
where $\mathbf{D}_{\mathrm{KL}}(\theta_a', \theta_a)$ is the KL divergence between the distributions of reward-cost vectors of arm $a$ under instances $\theta'$ and $\theta$. 

The RHS of \eqref{eq:prooflb:dpineq} satisfies
\begin{align}
    d_{\mathrm{KL}}(q_N', q_N) &= q_N'\log\left(\dfrac{q_N'}{q_N}\right) + (1-q_N')\log\left(\dfrac{1-q_N'}{1-q_N} \right) \\
    &\geq q_N'\log\left(\dfrac{1}{q_N}\right) - \log 2
\end{align}
where we have used the fact that $-z\log z - (1-z)\log(1-z)\leq \log 2$ for $z\in (0, 1)$.

Putting everything together and rearranging the terms, we have
\begin{align}
    \dfrac{1}{N}\log\left(\dfrac{1}{q_N}\right) \leq \dfrac{1}{q_N'}\left(\dfrac{\log 2}{N} + \sum_{a=1}^{K} \dfrac{\E_{\theta'}[M_{a, N}]}{N} \mathbf{D}_{\mathrm{KL}}(\theta_a', \theta_a) \right)\label{eq:prooflb:nonasymp}
\end{align}

Taking the limits on both sides, bounding $\frac{\E_{\theta'}[M_{a, N}]}{N}$ by 1, we have
\begin{equation}\label{eq:prooflb:boundbyKL}
    \limsup_{N\rightarrow\infty} -\dfrac{1}{N}\log \Pr_{\theta}(\mathcal{X}^{\pi_N} \neq \mathcal{I}^{\theta}) \leq \sum_{a=1}^{K}  \mathbf{D}_{\mathrm{KL}}(\theta_a', \theta_a)
\end{equation}

In particular, when $\pi_N$ is the \texttt{USLP} algorithm, \eqref{eq:prooflb:nonasymp} yields
\begin{equation}\label{eq:prooflb:boundbyKLUSLP}
    \limsup_{N\rightarrow\infty} -\dfrac{1}{N}\log \Pr_{\theta}(\mathcal{X}^{\mathrm{USLP}_N} \neq \mathcal{I}^{\theta}) \leq \dfrac{1}{K}\sum_{a=1}^{K}  \mathbf{D}_{\mathrm{KL}}(\theta_a', \theta_a)
\end{equation}

Using the formula for KL divergence between two multivariate Gaussian distributions, we have
\begin{align}
    \mathbf{D}_{\mathrm{KL}}(\theta_a', \theta_a) = \dfrac{1}{2}\left[\sigma_r^{-2}(r_a' - r_a)^2 + \sigma_c^{-2}\sum_{l=1}^L(c_{l,a}' - c_{l,a})^2\right]\label{eq:prooflb:KL}
\end{align}

Combining the above together and taking infimum over $\theta'\in\varTheta$, we have
\begin{align}
    \limsup_{N\rightarrow\infty} -\dfrac{1}{N}\log \Pr_{\theta}(\mathcal{X}^{\pi_N} \neq \mathcal{I}^{\theta}) &\leq \dfrac{1}{2}\inf_{\theta'\in \Theta}\left\{\sigma_r^{-2}\|\mathbf{r}'-\mathbf{r}\|_2^2 + \sigma_c^{-2}\|\mathbf{c}'-\mathbf{c}\|_2^2: \mathcal{I}^{\theta'} \neq \mathcal{I}^{\theta} \right\},\label{eq:prooflb:45}\\
    \limsup_{N\rightarrow\infty} -\dfrac{1}{N}\log \Pr_{\theta}(\mathcal{X}^{\mathrm{USLP}_N} \neq \mathcal{I}^{\theta}) &\leq \dfrac{1}{2K}\inf_{\theta'\in \Theta}\left\{\sigma_r^{-2}\|\mathbf{r}'-\mathbf{r}\|_2^2 + \sigma_c^{-2}\|\mathbf{c}'-\mathbf{c}\|_2^2: \mathcal{I}^{\theta'} \neq \mathcal{I}^{\theta} \right\},\label{eq:prooflb:46}
\end{align}
where $\|\cdot\|_2$ stands for the Euclidean 2-norm.

Note that $\mathcal{I}^{\theta'} \neq \mathcal{I}^{\theta}$ is true if either (i) $\mathcal{I}^{\theta}$ is an infeasible basis under $(\mathbf{A}', \mathbf{b})$; or (ii) $\mathcal{I}^{\theta}$ is feasible under $(\mathbf{A}', \mathbf{b})$ and there exists a basis set $\mathcal{J}\neq \mathcal{I}^{\theta}$ such that $(\mu_{\mathcal{I}^{\theta}}')^T(\mathbf{A}_{\mathcal{I}^{\theta}}')^{-1} \mathbf{b} \leq (\mu_{\mathcal{J}}')^T(\mathbf{A}_{\mathcal{J}}')^{-1} \mathbf{b}$. Therefore, 
\begin{align}
    &\quad~\inf_{\theta'\in \Theta}\left\{\sigma_r^{-2}\|\mathbf{r}'-\mathbf{r}\|_2^2 + \sigma_c^{-2}\|\mathbf{c}'-\mathbf{c}\|_2^2: \mathcal{I}^{\theta'} \neq \mathcal{I}^{\theta} \right\}\\
    &\leq \min\left\{\Delta_0^2, \min_{\mathcal{J}\neq \mathcal{I}^\theta}\Delta_{\mathcal{J}}^2 \right\} = \min\{\Delta_0^2, \Delta_{(L+2)}^2\}
\end{align}
concluding the proof.

In the last step above, there is a small caveat: the gaps $\Delta_0, (\Delta_{\mathcal{J}})_{\mathcal{J}\neq \mathcal{I}^{\theta}}$ were originally defined as infimums over all $\theta'\in \mathbb{R}^{(L+1)\times K}$ while the infimum in the RHS of \eqref{eq:prooflb:45} and \eqref{eq:prooflb:46} is taken over $\varTheta$, a proper subset of $\mathbb{R}^{(L+1)\times K}$. However, this is not a problem since $\varTheta$ is dense in $\mathbb{R}^{(L+1)\times K}$.

\textbf{Proof of $\varTheta$ being dense in $\mathbb{R}^{(L+1)\times K}$}: If $\theta'\in \mathbb{R}^{(L+1)\times K} \backslash \varTheta$ (i.e. $\theta'$ is a feasible instance that violates Assumption \ref{assump:unique}), then it is necessary that one of the following statements is true: (i) $\det(\mathbf{A}_{\mathcal{I}}') = 0$ for some basis $\mathcal{I}$; or (ii) $(\mu_{\mathcal{I}}')^T(\mathbf{A}_{\mathcal{I}}')^{-1} \mathbf{b} - (\mu_{\mathcal{J}}')^T(\mathbf{A}_{\mathcal{J}}')^{-1} \mathbf{b}  = 0$ for some bases $\mathcal{I}\neq \mathcal{J}$; or (iii) certain coordinate of $(\mathbf{A}_{\mathcal{I}}')^{-1} \mathbf{b}$ is zero for some basis $\mathcal{I}$. In either case, we have $h(\theta') = 0$ for some non-zero polynomial function $h:\mathbb{R}^{(L+1)\times K}\mapsto \mathbb{R}$. Since the set of zeroes of any non-zero polynomial function cannot contain any open ball, we conclude that $\mathbb{R}^{(L+1)\times K} \backslash \varTheta$ does not contain any open set, i.e. $\varTheta$ is dense in $\mathbb{R}^{(L+1)\times K}$.

\section{Additional Empirical Results}\label{app:moreexp}
In addition to the experiments in Section \ref{sec:empirical}, we also applied the three algorithms (\fcnm{}, \fcnmL{}, and \texttt{USLP}) to 6 instances with $L=2$ constraints. We set $K=K_0=24, \sigma_r = 1, \sigma_c = 0.5$ and $\costbound_1 = \costbound_2 = 1.0$. In all of the three instances, we set the costs of the 24 arms to be the 24 combinations of $c_{1, i}\in \{0.4, 0.6, 0.8, 1.0, 1.2, 1.4\}$ and $c_{2, i}\in \{0.7, 0.9, 1.1, 1.3\}$. Then, to define the rewards for each instance, we first pick a noise vector $(W_i)_{i=1}^{24}$ (which we will describe later) independently for each instance. In instance $D1$, we set $r_i = 1.0 - W_i$. In instance $D2$, we set $r_i = c_{1, i} - W_i$. In instance $D3$, we set $r_i = c_{1, i} + c_{2, i} - W_i$. We run the randomizations for a few times until the optimal support of each instance $Dj$ has exactly $j$ arms. Finally, we increment the reward of each arm in the optimal support by $0.02$ to ensure that the optimal support is unique and the instance is not overly difficult for any CBMAI algorithm. 

We consider two ways of choosing the random vector $(W_i)_{i=1}^{24}$: (i) a random permutation of $\{0.0, 0.02, \cdots, 0.46\}$. (ii) i.i.d. uniform random choices from $\{0.0, 0.02, \cdots, 0.28\}$. For the former choice, we will refer to the instance as $DjP$. For the latter, we will use $DjI$. The specific instances we used are reported in Table \ref{tab:D}.

For each combination of instance-algorithm-budget, we run the simulation for 5000 times independently and obtain the error rate as the proportion of times the algorithm output the wrong support. The results are provided in Figure \ref{fig:exp:D}. Each figure takes about 2 hours on an Apple M1 MacBook Air.

We can see that the \fcnmL{} algorithm on these two constraints instances either has the same performance as the \fcnm{} algorithm or does a bit better in terms of having a lower error rate.

\begin{figure}[!ht]
    \centering
    \includegraphics[width=0.85\textwidth]{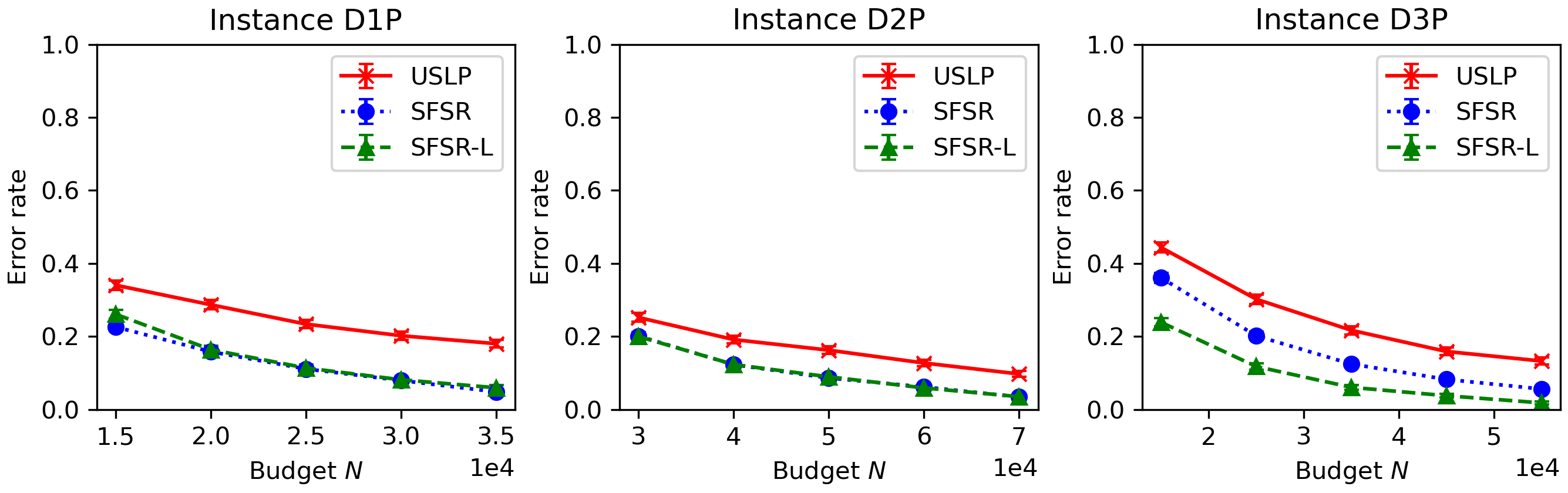}\\
    \includegraphics[width=0.85\textwidth]{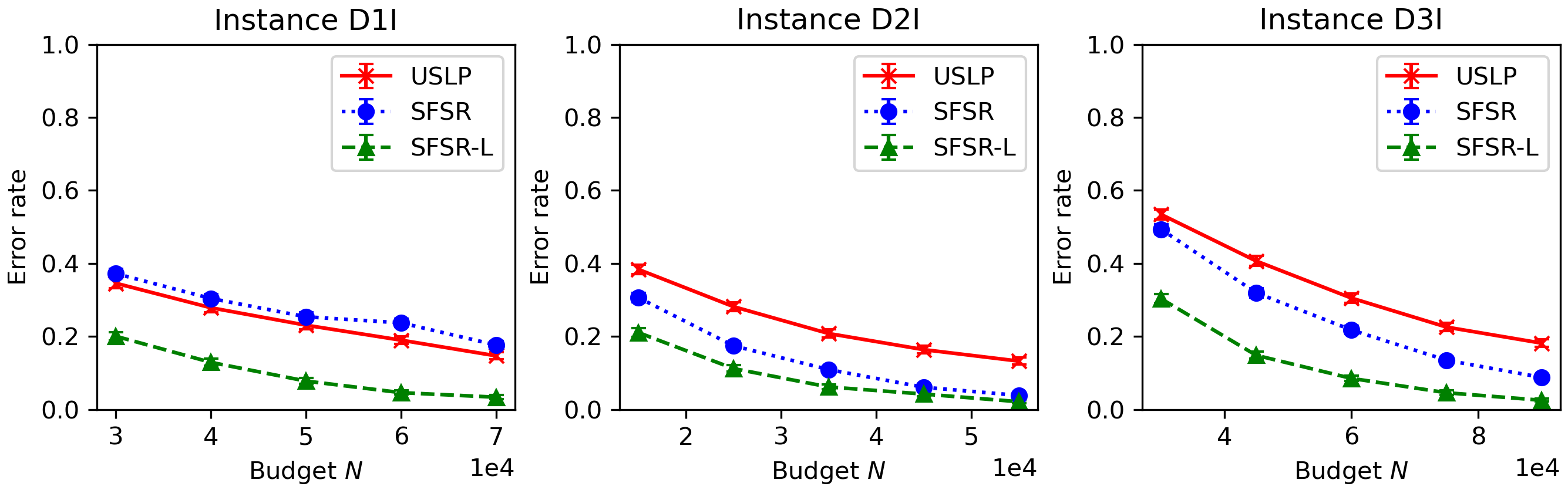}
    \caption{Simulation results for 6 instances with $L=2$ under varying budget. 95\% confidence intervals are indicated and tight.}
    \label{fig:exp:D}
\end{figure}

\begin{table}[!ht]
    \caption{Description of the mean rewards and costs of instances. The rewards of arms in the optimal support is shown in bold. Top Row (from left to right): $D1P,D2P,D3P$. Bottom Row (from left to right): $D1I,D2I,D3I$.}
    \centering\small
    \scalebox{0.85}{
        \begin{tabular}{ccccc}
        \hline
        \!\!$c_1\backslash c_2$\!\! & $0.7$ & $0.9$ & $1.1$ & $1.3$\\\hline\hline
        $0.4$ & $0.88$ & $0.80$ & $0.82$ & $0.66$\\\hline
        $0.6$ & $0.72$ & $\mathbf{1.02}$ & $0.70$ & $0.54$\\\hline
        $0.8$ & $0.92$ & $0.74$ & $0.94$ & $0.84$\\\hline
        $1.0$ & $0.76$ & $0.60$ & $0.56$ & $0.86$\\\hline
        $1.2$ & $0.98$ & $0.64$ & $0.68$ & $0.78$\\\hline
        $1.4$ & $0.62$ & $0.96$ & $0.90$ & $0.58$\\\hline
        \end{tabular}
    }
    \scalebox{0.85}{
        \begin{tabular}{ccccc}
        \hline
        \!\!$c_1\backslash c_2$\!\! & $0.7$ & $0.9$ & $1.1$ & $1.3$\\\hline\hline
        $0.4$ & $0.08$ & $0.28$ & \!$-0.02$\! & $0.22$\\\hline
        $0.6$ & $0.20$ & $0.46$ & $0.54$ & $0.40$\\\hline
        $0.8$ & $0.42$ & $0.52$ & $\mathbf{0.80}$ & $0.34$\\\hline
        $1.0$ & $0.92$ & $0.78$ & $0.96$ & $0.70$\\\hline
        $1.2$ & $0.94$ & $0.76$ & $1.10$ & $0.86$\\\hline
        $1.4$ & $\mathbf{1.42}$ & $1.16$ & $1.04$ & $1.24$\\\hline
        \end{tabular}
    }
    \scalebox{0.85}{
        \begin{tabular}{ccccc}
        \hline
        \!\!$c_1\backslash c_2$\!\! & $0.7$ & $0.9$ & $1.1$ & $1.3$\\\hline\hline
        $0.4$ & $1.04$ & $1.22$ & $1.28$ & $1.26$\\\hline
        $0.6$ & $0.98$ & $1.22$ & $1.60$ & $1.54$\\\hline
        $0.8$ & $1.26$ & $1.40$ & $\mathbf{1.88}$ & $1.84$\\\hline
        $1.0$ & $\mathbf{1.72}$ & $1.76$ & $1.64$ & $1.92$\\\hline
        $1.2$ & $1.78$ & $1.70$ & $1.96$ & $2.08$\\\hline
        $1.4$ & $1.94$ & $\mathbf{2.30}$ & $2.32$ & $2.50$\\\hline
        \end{tabular}
    }
    \\\vspace{.5em}
    \scalebox{0.85}{
        \begin{tabular}{ccccc}
            \hline
            \!\!$c_1\backslash c_2$\!\! & $0.7$ & $0.9$ & $1.1$ & $1.3$\\\hline\hline
            $0.4$ & $0.84$ & $\mathbf{1.02}$ & $0.74$ & $0.76$\\\hline
            $0.6$ & $0.84$ & $0.88$ & $0.96$ & $0.90$\\\hline
            $0.8$ & $0.90$ & $0.98$ & $0.92$ & $0.80$\\\hline
            $1.0$ & $0.98$ & $0.98$ & $0.90$ & $0.74$\\\hline
            $1.2$ & $0.94$ & $0.90$ & $0.88$ & $0.72$\\\hline
            $1.4$ & $0.78$ & $0.82$ & $0.88$ & $0.84$\\\hline
        \end{tabular}
    }
    \scalebox{0.85}{
        \begin{tabular}{ccccc}
        \hline
        \!\!$c_1\backslash c_2$\!\! & $0.7$ & $0.9$ & $1.1$ & $1.3$\\\hline\hline
        $0.4$ & $\mathbf{0.40}$ & $0.18$ & $0.28$ & $0.14$\\\hline
        $0.6$ & $0.44$ & $0.54$ & $0.40$ & $0.32$\\\hline
        $0.8$ & $0.56$ & $0.52$ & $0.68$ & $0.64$\\\hline
        $1.0$ & $0.84$ & $0.80$ & $0.82$ & $0.74$\\\hline
        $1.2$ & $0.94$ & $1.18$ & $1.02$ & $1.12$\\\hline
        $1.4$ & $\mathbf{1.42}$ & $1.16$ & $1.24$ & $1.24$\\\hline
        \end{tabular}
    }
    \scalebox{0.85}{
        \begin{tabular}{ccccc}
        \hline
        \!\!$c_1\backslash c_2$\!\! & $0.7$ & $0.9$ & $1.1$ & $1.3$\\\hline\hline
        $0.4$ & $0.92$ & $1.12$ & $1.32$ & $1.42$\\\hline
        $0.6$ & $1.14$ & $1.42$ & $1.62$ & $1.68$\\\hline
        $0.8$ & $1.30$ & $\mathbf{1.70}$ & $1.68$ & $\mathbf{2.06}$\\\hline
        $1.0$ & $1.46$ & $1.82$ & $2.02$ & $2.04$\\\hline
        $1.2$ & $1.84$ & $2.02$ & $2.12$ & $2.24$\\\hline
        $1.4$ & $2.06$ & $\mathbf{2.28}$ & $2.32$ & $2.58$\\\hline
        \end{tabular}
    }\\\vspace{1em}
    \label{tab:D}
\end{table}

\end{document}